\def\@xfootnote[#1]{%
  \protected@xdef\@thefnmark{#1}%
  \@footnotemark\@footnotetext}
\newcommand{\repo}{\url{https://github.com/robertcsordas/linear_layer_as_attention}}
\def\eqref#1{equation~\ref{#1}}
\def\1{\bm{1}}
\def\va{{\bm{a}}}
\def\vb{{\bm{b}}}
\def\ve{{\bm{e}}}
\def\vk{{\bm{k}}}
\def\vq{{\bm{q}}}
\def\vv{{\bm{v}}}
\def\vx{{\bm{x}}}
\def\vy{{\bm{y}}}
\def\mA{{\bm{A}}}
\def\mE{{\bm{E}}}
\def\mK{{\bm{K}}}
\def\mV{{\bm{V}}}
\def\mW{{\bm{W}}}
\def\mX{{\bm{X}}}
\DeclareMathAlphabet{\mathsfit}{\encodingdefault}{\sfdefault}{m}{sl}
\SetMathAlphabet{\mathsfit}{bold}{\encodingdefault}{\sfdefault}{bx}{n}
\newcommand{\relu}{\mathrm{relu}}
\newcommand{\softmax}{\mathrm{softmax}}
\newcommand{\Attention}{\mathrm{Attention}}
\newcommand{\FFN}{\mathrm{FFN}}
\newtheorem{prop}{Proposition}
\newtheorem{col}{Corollary}
\newcommand{\bluet}[1]{\textbf{\textcolor{blue}{#1}}}
\theoremstyle{plain}
\newtheorem{theorem}{Theorem}[section]
\newtheorem{lemma}[theorem]{Lemma}
\theoremstyle{definition}
\newtheorem{definition}[theorem]{Definition}
\theoremstyle{remark}
\newtheorem{remark}[theorem]{Remark}
\icmltitlerunning{The Dual Form of Neural Networks Revisited}
\begin{document}

\twocolumn[
\icmltitle{The Dual Form of Neural Networks Revisited: Connecting Test Time Predictions to Training Patterns via Spotlights of Attention}
\icmlsetsymbol{equal}{*}

\begin{icmlauthorlist}
\icmlauthor{Kazuki Irie$^*$}{idsia}
\icmlauthor{R\'obert Csord\'as$^*$}{idsia}
\icmlauthor{J\"urgen Schmidhuber}{idsia,kaust}
\end{icmlauthorlist}

\icmlaffiliation{idsia}{The Swiss AI Lab, IDSIA, USI \& SUPSI, Lugano, Switzerland}
\icmlaffiliation{kaust}{AI Initiative, King Abdullah University of Science and Technology (KAUST), Thuwal, Saudi Arabia}

\icmlcorrespondingauthor{}{\{kazuki, robert, juergen\}@idsia.ch}




\icmlkeywords{Machine Learning, ICML}

\vskip 0.3in
]

\printAffiliationsAndNotice{\icmlEqualContribution} 

\begin{abstract}
Linear layers in neural networks (NNs) trained by gradient descent can be expressed as a key-value memory system which stores all training datapoints and the initial weights, and produces outputs using unnormalised dot attention over the entire training experience. While this has been technically known since the 1960s, no prior work has effectively studied the operations of NNs in such a form, presumably due to prohibitive time and space complexities and impractical model sizes, all of them growing linearly with the number of training patterns which may get very large. However, this dual formulation offers a possibility of directly visualising how an NN makes use of training patterns at test time, by examining the corresponding attention weights. We conduct experiments on small scale supervised image classification tasks in single-task, multi-task, and continual learning settings, as well as language modelling, and discuss potentials and limits of this view for better understanding and interpreting how NNs exploit training patterns.
Our code is public\footnote[$\dagger$]{\repo}.
\end{abstract}

\section{Introduction}
\label{sec:intro}
Despite the broad success of neural nets (NNs) in many applications, 
much of their internal functioning
remains obscure.
Naive visualisation of their activations or weight matrices rarely shows human-interpretable patterns, with the occasional exception of certain special structures such as filters in convolutional NNs trained for image processing \citep{ZeilerF14}, attention weights \citep{bahdanau2014neural} 
or the sequential Jacobian \citep{Graves2008} in sequence processing,
or, to a limited extent, the distribution of individual word embeddings in natural language processing (NLP) \citep{MikolovSCCD13}.
In many ways,  NNs remain blackboxes.
In particular, while iteratively trained on a large number of datapoints, 
the entire training experience gets \textit{somehow} compressed (through lossy compression)
into a fixed size weight matrix,
which may be useful for making predictions on yet unseen datapoints,
although the raw values of weights are not easily human-interpretable.

While it is debatable whether this lack of interpretability is a problem,
it makes it hard for humans to  explain certain practical results obtained by NNs---especially those trained on a vast amount of data.
For example, how can big language models such as GPT-3 \citep{gpt3} generate an answer to a question never seen during training, translate languages without being trained to do so, or solve previously unseen math problems?
How can DALL-E \citep{RameshPGGVRCS21} generate various pictures of a \textit{radish walking a dog} or a \textit{banana performing stand-up comedy} without training examples containing such images\footnote{We assume that this was the case.}?
Perhaps there have been at least some pictures of ``radish''
and others representing the concept of ``walking a dog'' among the training samples,
and somehow the model \textit{interpolated} them\footnote{
Here by ``interpolate'', we informally mean ``combine''.
For a discussion based on a formal definition of ``interpolation'', see e.g., \citet{balestriero2021learning}.
}.
If so, is it possible to point out exactly which training samples are the \textit{original sources} of that specific output?

Here we propose to revisit the \textit{dual form of the perceptron} \citep{aizerman1964theoretical} and apply it in the modern context of deep NNs, with the objective of better understanding how training datapoints relate to test time predictions in NNs.
Essentially, the dual form expresses the forward operation of any linear layers in NNs trained by gradient descent (GD) as a key/value/query-attention operation \citep{trafo} where the keys and values are training datapoints and the query is generated from the test input (details in Sec.~\ref{sec:main}). This allows for directly connecting test time predictions to training datapoints.
To the best of our knowledge, no prior work has studied deep NNs in their dual form,
which is not surprising, considering certain obvious computational drawbacks (see Sec.~\ref{sec:main}).

Importantly, none of the mathematical results we'll discuss is novel: originally introduced by \citet{aizerman1964theoretical}, the presentation of the perceptron \citep{rosenblatt1958} in its primal and dual forms is well-known
and often repeated in the literature and in textbooks \citep{scholkopf2002learning, bishop2006PRML},
especially in the context of support vector machines \citep{BoserGV92, burges1998tutorial},
for the case where the output of a linear layer is one-dimensional (i.e., its weight matrix reduces to a vector).
Unlike prior works, however, our focus is on the dual form of linear layer operations based on weights in matrix form (arguably the most frequently used operations in NNs) inside a deep NN trained by gradient descent.
Also, while there is no technical gain in expressing the dual form in terms of key-value/attention concepts \citep{trafo, MillerFDKBW16, SukhbaatarSWF15, graves2014neural},
such a formulation has become very popular
since the advent of Transformers \citep{trafo}, and we believe many modern readers will find our equations straightforward to interpret.\looseness=-1

As a first empirical work analysing deep NNs under this view,
our main experiments are based on small scale models and datasets in image classification and language modelling.
For image classification,
we analyse feedforward NNs with two hidden layers using the MNIST \citep{lecun1998mnist} and Fashion-MNIST \citep{xiao2017fashion} datasets.
We start with the single task scenario to illustrate our basic approach.
Then we investigate multi-task and continual learning scenarios.
Finally, we conduct a similar analysis in the NLP domain. We train language models based on long short-term memory recurrent NNs \citep{hochreiter1997long} on a small public domain book and the WikiText-2 dataset \citep{merity2016pointer}.

\section{Preliminaries}
\label{sec:prel}

The results presented here are preliminary in the sense that they are
prerequisites needed to derive and express the core results of  Sec.~\ref{sec:main}.
While all proofs are trivial, and some of the results are well-known, we still opt for presenting them in detail, in the form  {\em Definition/Lemma/Proposition} (and {\em Corollary} later in Sec.~\ref{sec:main}) to concisely highlight the core concepts and their relations in a self-contained manner.
Please refer to our Related Work Section \ref{sec:rel} for further comments and references.

We first introduce the two following definitions used throughout this paper.
In what follows, let $d_{\text{in}}$, $d_{\text{out}}$ and $T$ denote positive integers.

\begin{definition}[Unnormalised Dot Attention]
\label{def:attn}
Let $\mK = (\vk_1, ..., \vk_T) \in \mathbb{R}^{d_{\text{in}} \times T}$ and $\mV = (\vv_1, ..., \vv_T) \in \mathbb{R}^{d_{\text{out}} \times T}$
denote matrices representing $T$ key and value vectors.
Let $\vq \in \mathbb{R}^{d_{\text{in}}}$ denote a query vector. An unnormalised linear dot attention operation $\Attention(\mK, \mV, \vq)$ (``attention'' for short)
computes the following weighted average of value vectors $\vv_t$:
\begin{eqnarray}
\Attention(\mK, \mV, \vq) = \sum_{t=1}^T \alpha_t \vv_t
\end{eqnarray}
where the weights $\alpha_t = \vk_t^\intercal \vq \in \mathbb{R}$ are dot products between key $\vk_t$ and query $\vq$ vectors, and are called \textit{attention weights}.\\
\end{definition}

\begin{lemma}[] 
\label{lemma1}
$\Attention$ computation defined above can be expressed as:
\begin{eqnarray}
\label{eq:attn_mat}
\Attention(\mK, \mV, \vq) = \mV\mK^\intercal\vq
\end{eqnarray}
\end{lemma}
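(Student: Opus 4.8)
The plan is to prove the identity by direct expansion, using only the column structure of the matrices fixed in Definition~\ref{def:attn}. Since $\mK = (\vk_1, \dots, \vk_T)$ stores the key vectors as its columns, the transpose $\mK^\intercal$ stores them as rows, so the matrix-vector product $\mK^\intercal \vq \in \mathbb{R}^{T}$ is exactly the vector whose $t$-th entry is $\vk_t^\intercal \vq = \alpha_t$. This is the one observation that does the work.

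Next I would left-multiply by $\mV = (\vv_1, \dots, \vv_T)$. Writing a matrix-times-vector product as a linear combination of the matrix's columns gives $\mV (\mK^\intercal \vq) = \sum_{t=1}^T (\mK^\intercal \vq)_t\, \vv_t = \sum_{t=1}^T \alpha_t \vv_t$, which is precisely $\Attention(\mK, \mV, \vq)$ as defined. Associativity of matrix multiplication then lets me drop the parentheses and write $\mV\mK^\intercal\vq$, completing the argument. (One could equivalently verify the claim coordinate by coordinate, but the column-combination view is shorter.)

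There is essentially no real obstacle here: the only point requiring care is the index/transpose bookkeeping — applying the convention ``keys and values are stored as columns'' consistently, so that $\mK^\intercal\vq$ collects the dot products $\alpha_t$ in the correct order and $\mV$ then reweights the value columns $\vv_t$ by them. A quick dimension check ($d_{\text{out}}\times T$ times $T\times d_{\text{in}}$ times $d_{\text{in}}\times 1$ yielding $d_{\text{out}}\times 1$) confirms the shapes line up, and the proof is a couple of lines.
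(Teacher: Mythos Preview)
Your argument is correct and essentially the same direct computation as the paper's, just associated the other way: you compute $\mV(\mK^\intercal\vq)$ by first reading off the attention weights $\alpha_t = (\mK^\intercal\vq)_t$ and then taking the column combination of $\mV$, whereas the paper computes $(\mV\mK^\intercal)\vq$ by first pulling $\vq$ out of the sum and recognising $\sum_{t=1}^T \vv_t\vk_t^\intercal = \mV\mK^\intercal$ as a sum of outer products. The only practical difference is that the paper's route isolates the outer-product identity $\sum_t \vv_t \otimes \vk_t = \mV\mK^\intercal$ as an intermediate equation, which it then reuses verbatim in the proof of Proposition~\ref{prop1}; your version is marginally more direct for this lemma alone but does not surface that identity explicitly.
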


\begin{proof}
\begin{align}
\Attention(\mK, \mV, \vq) &= \sum_{t=1}^T \alpha_t \vv_t = \sum_{t=1}^T \vv_t \alpha_t \\
 &= \sum_{t=1}^T \vv_t \vk_t^\intercal \vq = \big(\sum_{t=1}^T \vv_t \vk_t^\intercal \big) \vq
\end{align}
we obtain Eq.~\ref{eq:attn_mat} since the term in the parentheses is:
\begin{align}
\label{eq:outer}
\sum_{t=1}^T \vv_t \vk_t^\intercal = \sum_{t=1}^T \vv_t \otimes \vk_t = \mV\mK^\intercal
\end{align}
where $\otimes$ denotes the outer product.
\end{proof}

\begin{remark}
Obviously, referring to the equations above as \textit{attention} (or unnormalised attention)
emphasizes the relation to the regular softmax normalised
dot product attention \citep{luong2015, bahdanau2014neural} which is $\mV\softmax(\mK^\intercal\vq)$ using the notations above.
We also note that  \citet{schmidhuber1993reducing} referred to the unnormalised attention used in the context of fast weight controllers as ``internal spotlights of attention.''
\end{remark}

\begin{definition}[Equivalent Systems, for shortcut]
Two systems $S_1$ and $S_2$ defined over the same input and output domains $\mathcal{D}_{\text{in}}$ and $\mathcal{D}_{\text{out}}$,
are said to be \textit{equivalent} if and only if for any input, their outputs are equal, i.e.,
for any $\vx \in \mathcal{D}_{\text{in}}$, the following holds
\begin{eqnarray}
S_1(\vx) = S_2(\vx)
\end{eqnarray}
\end{definition}
This allows us to informally talk about ``equivalence'' between two models regardless of their computational complexities (which might differ).

The following proposition expresses a general statement on the duality between the linear transformation with a \textit{fixed size} weight matrix constructed as a sum of outer-products between two known vectors
and an \textit{arbitrary size} attention-based key-value memory.
We apply this result later in the case of linear layers in an NN trained by gradient descent.

\begin{figure}[t]
    \begin{center}
        \includegraphics[width=.4\columnwidth]{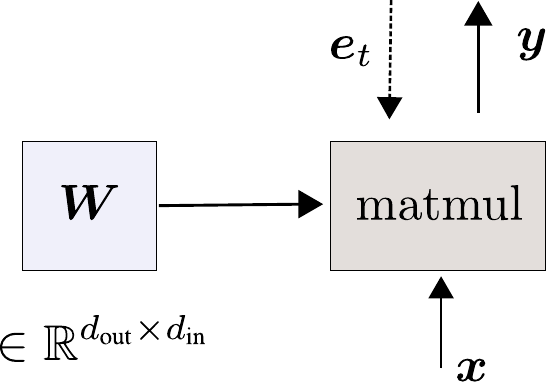}
        \caption{The primal form of a linear layer to be contrasted with the dual form in Figure \ref{fig:dual}.}
        \label{fig:primal}
    \end{center}
\end{figure}

\begin{prop}[Attention/Linear Layer Duality]
\label{prop1}
Let  $\mK = (\vk_1, ..., \vk_T) \in \mathbb{R}^{d_{\text{in}} \times T}$ and $\mV = (\vv_1, ..., \vv_T) \in \mathbb{R}^{d_{\text{out}} \times T}$ denote matrices with $T$ column vectors.
The following two systems $S_1$ and $S_2$ are equivalent:

$S_1$ (Linear layer): A system consisting of one weight matrix $\mW \in \mathbb{R}^{d_{\text{out}} \times d_{\text{in}}}$ constructed as:
\begin{eqnarray}
\mW = \sum_{t=1}^{T} \vv_t  \otimes \vk_t
\end{eqnarray}
which transforms input $\vx \in \mathbb{R}^{d_\text{in}}$ to output $S_1(\vx) \in \mathbb{R}^{d_\text{out}}$ as:
\begin{eqnarray}
S_1(\vx) = \mW \vx
\end{eqnarray}
$S_2$ (Attention layer): A system consisting of memory storing $T$ key-value pairs $(\vk_1, \vv_1), .., (\vk_T, \vv_T)$ which transforms input $\vx \in \mathbb{R}^{d_\text{in}}$ to output $S_2(\vx) \in \mathbb{R}^{d_\text{out}}$ as:
\begin{eqnarray}
S_2(\vx) = \Attention(\mK, \mV, \vx)
\end{eqnarray}
\end{prop}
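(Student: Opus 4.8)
The plan is to reduce the claim directly to Lemma~\ref{lemma1} together with Eq.~\ref{eq:outer}; no new machinery is needed. First I would observe that both $S_1$ and $S_2$ are defined on the common input domain $\mathcal{D}_{\text{in}} = \mathbb{R}^{d_{\text{in}}}$ and produce outputs in the common domain $\mathcal{D}_{\text{out}} = \mathbb{R}^{d_{\text{out}}}$, so that the notion of equivalence from the preceding definition applies. It then suffices to fix an arbitrary $\vx \in \mathbb{R}^{d_{\text{in}}}$ and show $S_1(\vx) = S_2(\vx)$.

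Next I would expand the right-hand side: by Lemma~\ref{lemma1}, $S_2(\vx) = \Attention(\mK, \mV, \vx) = \mV\mK^\intercal \vx$. By Eq.~\ref{eq:outer}, the matrix $\mV\mK^\intercal$ equals $\sum_{t=1}^T \vv_t \otimes \vk_t$, which is precisely the matrix $\mW$ defining $S_1$. Substituting, $S_2(\vx) = \mW\vx = S_1(\vx)$. Since $\vx$ was arbitrary, the two systems are equivalent, which completes the argument.

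Honestly there is no real obstacle here: the proposition is essentially a repackaging of Lemma~\ref{lemma1}, and the only thing worth being careful about is bookkeeping — making sure the matrix dimensions line up ($\mW \in \mathbb{R}^{d_{\text{out}} \times d_{\text{in}}}$, $\mV\mK^\intercal \in \mathbb{R}^{d_{\text{out}} \times d_{\text{in}}}$) and that the equivalence is quantified over \emph{all} inputs rather than a particular one. If anything, the ``hard part'' is purely expository: stating the result at the right level of generality so that it can later be instantiated for the weight matrix of a linear layer produced by gradient descent, which is the use made of it in Sec.~\ref{sec:main}.
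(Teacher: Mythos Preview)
Your proposal is correct and mirrors the paper's own proof almost exactly: the paper also starts from $S_2(\vx)$, applies Lemma~\ref{lemma1} to write it as $\mV\mK^\intercal\vx$, and then invokes Eq.~\ref{eq:outer} to identify $\mV\mK^\intercal$ with $\mW$, concluding $S_2(\vx)=S_1(\vx)$. The only difference is cosmetic---you make the domain check and the universal quantification over $\vx$ explicit, which the paper leaves implicit.
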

\begin{proof}
We have almost shown this already in Eq.~\ref{eq:outer}.
Given $\vx \in \mathbb{R}^{d_\text{in}}$, $\mK = (\vk_1, ..., \vk_T) \in \mathbb{R}^{d_{\text{in}} \times T}$ and $\mV = (\vv_1, ..., \vv_T) \in \mathbb{R}^{d_{\text{out}} \times T}$,
by starting from the form of attention shown in Lemma \ref{lemma1}, we obtain
\begin{align}
S_2(\vx) & = \Attention(\mK, \mV, \vx) = \mV\mK^\intercal\vx \\
    & = \big(\sum_{t=1}^{T} \vv_t  \otimes \vk_t \big) \vx = \mW \vx = S_1(\vx) \qedhere
\end{align}
\end{proof}
This is essentially a general formulation of calculation used to show the equivalence between linear models and kernel machines in the 1960s (\citet{aizerman1964theoretical}; cf.~Related Work Sec.~\ref{sec:rel}).
We express it for an arbitrary weight matrix constructed as a sum of outer-products, and using the modern language of key-value/attention.

\begin{figure}[t]
    \begin{center}
        \includegraphics[width=1.\columnwidth]{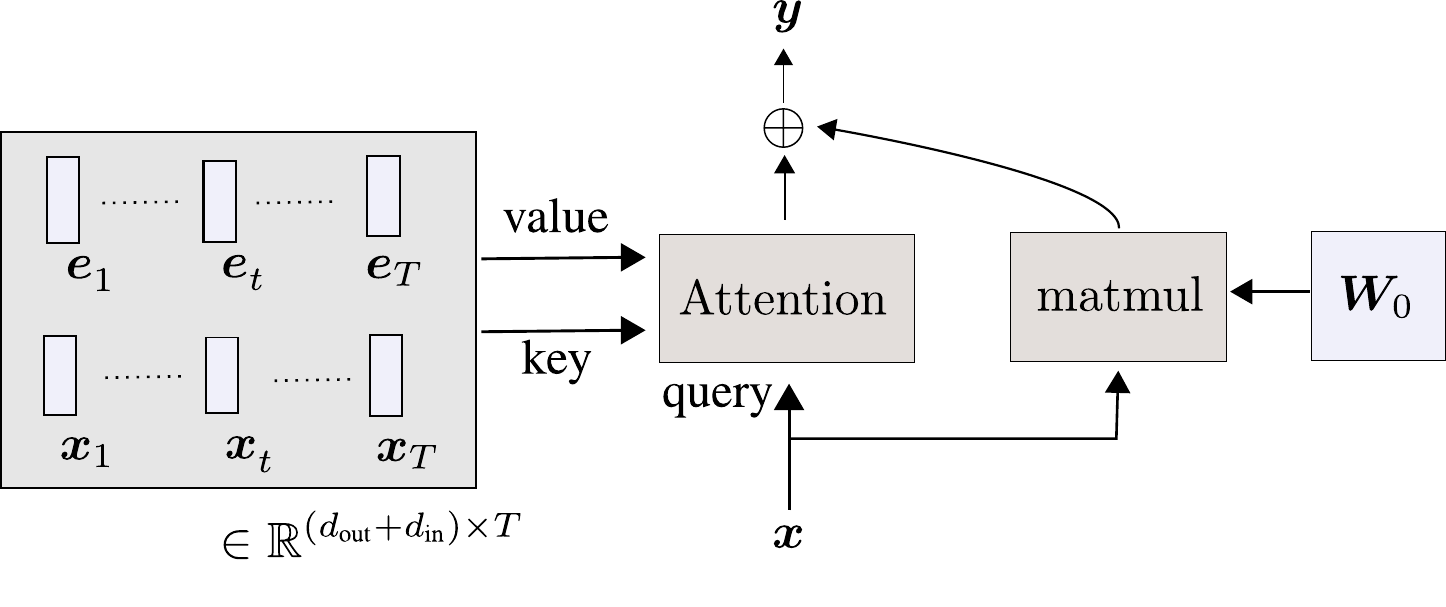}
        \caption{The dual form of a linear layer trained by gradient descent is a key-value memory with attention storing the entire training experience. Compare to the primal form in Figure \ref{fig:primal}.}
        \label{fig:dual}
    \end{center}
\end{figure}

\section{The Dual Form of Linear Layers in NNs Trained by Gradient Descent}
\label{sec:main}

The following corollary obtained from the proposition above is the core result explored
in the experimental section.
It expresses the dual form of a linear layer
trained by gradient descent as a key-value system storing training patterns as key-value pairs,
which computes the output from a test query using attention over the key-value memory.

This duality is illustrated in Figures \ref{fig:primal} and \ref{fig:dual}.

\begin{col}[Dual Form of a Linear Layer Trained by GD]
\label{col}
The following two systems $S_1$ and $S_2$ are equivalent:

$S_1$ (Primal form): A linear layer in a neural network trained by gradient descent in some error function using $T$ training inputs to this layer $(\vx_1, ..., \vx_T)$ with $\vx_t \in \mathbb{R}^{d_{\text{in}}}$ and corresponding (backpropagation) error signals\footnote{
In the case of standard gradient descent using a loss $\mathcal{L}$,
$\ve_t = -\eta_t (\nabla_{\vy} \mathcal{L})_t$
where $\eta_t \in \mathbb{R}$ is the learning rate
and $\vy_t = \mW_t \vx_t$ is the output of the linear layer using the weight matrix $\mW_t$ at step $t$.
}
$(\ve_1, ..., \ve_T)$ with $\ve_t \in \mathbb{R}^{d_{\text{out}}}$ obtained by gradient descent. Its weight matrix $\mW \in \mathbb{R}^{d_{\text{out}} \times d_{\text{in}}}$ is thus:
\begin{eqnarray}
\mW = \mW_0 + \sum_{t=1}^{T} \ve_t \otimes \vx_t
\end{eqnarray}

where $\mW_0 \in \mathbb{R}^{d_{\text{out}} \times d_{\text{in}}}$ is the initialisation.
The layer transforms input $\vx \in \mathbb{R}^{d_\text{in}}$ to output $S_1(\vx) \in \mathbb{R}^{d_\text{out}}$ as:
\begin{eqnarray}
S_1(\vx) = \mW \vx
\end{eqnarray}

$S_2$ (Dual form): A layer which stores $T$ key-value pairs $(\vx_1, \ve_1), .., (\vx_T, \ve_T)$
i.e.,
a key matrix $\mX = (\vx_1, ..., \vx_T) \in \mathbb{R}^{d_{\text{in}} \times T}$
and a value matrix
$\mE = (\ve_1, ..., \ve_T) \in \mathbb{R}^{d_{\text{out}} \times T}$, and a weight matrix $\mW_0 \in \mathbb{R}^{d_{\text{out}} \times d_{\text{in}}}$
which transforms input $\vx \in \mathbb{R}^{d_\text{in}}$ to output $S_2(\vx) \in \mathbb{R}^{d_\text{out}}$ as:
\begin{eqnarray}
\label{eq:dual_linear}
S_2(\vx) = \mW_0 \vx + \Attention(\mX, \mE, \vx)
\end{eqnarray}
\end{col}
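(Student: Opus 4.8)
The plan is to peel off the initialisation term and then invoke Proposition~\ref{prop1} verbatim; the only preliminary work is to verify that the accumulated weight update is the stated sum of outer products. So first I would unroll the gradient-descent recursion. Let $\mW_t \in \mathbb{R}^{d_{\text{out}} \times d_{\text{in}}}$ denote the weight matrix of this layer after the $t$-th update, with $\mW_T = \mW$. At step $t$ the layer computes $\vy_t = \mW_{t-1}\vx_t$, where $\vx_t$ is the activation fed into \emph{this particular layer} on the $t$-th training step. By backpropagation, the gradient of the loss with respect to the weight matrix factorises as the outer product $(\nabla_{\vy}\mathcal{L})_t \otimes \vx_t$, so a single update reads $\mW_t = \mW_{t-1} - \eta_t\,(\nabla_{\vy}\mathcal{L})_t \otimes \vx_t = \mW_{t-1} + \ve_t \otimes \vx_t$, using the definition $\ve_t = -\eta_t (\nabla_{\vy}\mathcal{L})_t$ from the footnote. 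Telescoping over $t = 1, \dots, T$ gives $\mW = \mW_0 + \sum_{t=1}^{T} \ve_t \otimes \vx_t$, i.e.\ exactly the weight matrix of $S_1$.

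Next I would apply the duality. For an arbitrary input $\vx \in \mathbb{R}^{d_\text{in}}$, $S_1(\vx) = \mW\vx = \mW_0\vx + \big(\sum_{t=1}^{T} \ve_t \otimes \vx_t\big)\vx$. The bracketed system is precisely the linear layer $S_1$ of Proposition~\ref{prop1} under the identification $\mV = \mE = (\ve_1, \dots, \ve_T)$ and $\mK = \mX = (\vx_1, \dots, \vx_T)$; that proposition then gives $\big(\sum_{t=1}^{T} \ve_t \otimes \vx_t\big)\vx = \Attention(\mX, \mE, \vx)$. Substituting, $S_1(\vx) = \mW_0\vx + \Attention(\mX, \mE, \vx) = S_2(\vx)$ for every $\vx$, which is the claimed equivalence.

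I do not expect a genuine obstacle: the substantive ingredient is the elementary backpropagation fact that $\partial\mathcal{L}/\partial\mW = (\nabla_{\vy}\mathcal{L}) \otimes \vx$ when $\vy = \mW\vx$, and everything else is a telescoping sum followed by a direct citation of Proposition~\ref{prop1}. The one point deserving care — and worth stating explicitly — is that $\vx_t$ is the input \emph{to this layer} at step $t$ (not the network's raw input, and in general a function of earlier parameters), and that the corollary describes a single realised training trajectory with the pairs $(\vx_t, \ve_t)_{t=1}^{T}$ taken as given, so no circularity is introduced.
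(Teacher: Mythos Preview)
Your proposal is correct and follows the same approach as the paper: apply Proposition~\ref{prop1} to the accumulated update term and add back $\mW_0\vx$. The paper's own proof is a one-liner (``trivially obtained by applying Proposition~\ref{prop1} to $\Attention(\mX,\mE,\vx)$''); your additional derivation of the outer-product form of $\mW$ via the telescoping GD recursion is welcome detail but not strictly required, since that formula is already stated as part of the corollary's hypotheses.
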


\begin{proof}
The result is trivially obtained by applying Proposition \ref{prop1} to $\Attention(\mX, \mE, \vx)$ in Eq.~\ref{eq:dual_linear}.
\end{proof}

We note that unlike the primal form, the computational complexity of the dual form $S_2$ above depends on the
number of training datapoints $T$ the layer is trained on (i.e., the number of all inputs forwarded to the linear layer during training for which the loss is computed; that is, in a typical mini-batch stochastic gradient descent setting, the batch size times the number of training iterations).
The time and space complexities of the $\Attention$ computation in Eq.~\ref{eq:dual_linear}
is linear in $T$, while $T$ can be potentially very large.
The model size is especially prohibitive in practical scenarios:
the parameters of the dual form are the ($d_{\text{out}} \times d_{\text{in}}$)-dimensional initial weight matrix $\mW_0$
---whose size already equals the number of parameters in the primal form---
and the $((d_{\text{in}} + d_{\text{out}}) \times T)$-sized
key-value memory recording all training datapoints.
The dual form is thus not a practical form for regular settings.
Indeed, this duality of Proposition \ref{prop1} has been used in the converse direction to obtain time and space efficient attention computation (see Sec.~\ref{sec:rel}/Linear Transformers).

However, there are also benefits in viewing NNs under the dual form.
It explicitly shows that the output of an NN linear layer trained by backpropagation is mainly a linear combination of the training error signals $\ve_t$ the layer receives during training:
$\sum_1^T \alpha_t \ve_t$, where the weights $\alpha_t$
are computed by comparing the test query to each training input.
This is potentially more interpretable than the primal form as the attention weights $\alpha_t = x_t^\intercal x$ should indicate which training datapoints are ``activated'' for a given test input.
The main contribution of this paper is to
effectively implement this dual form
and visualise the corresponding attention
in different scenarios in Sec.~\ref{sec:exp}.
Eq.~\ref{eq:dual_linear} reveals further notable properties
listed as remarks as follows:

\begin{remark}[\textbf{Nothing is ``forgotten''}]
In a system expressed as in the dual form $S_2$ of Corollary \ref{col}, \textit{nothing is forgotten}.
The entire life of an NN is recorded and stored as a key matrix $\mX$ and value matrix $\mE$.
Roughly speaking, the only limitation of the model's capability to ``remember'' something
is the limitation of the retrieval process (we illustrate this in the continual learning experiments in Sec.~\ref{sec:continual}).
Also, note that the $\Attention$ computation in Eq.~\ref{eq:dual_linear}
remains invariant if we shuffle the order of columns in
the key-value storage,
as it is carried out without any explicit positional encoding (similarly to auto-regressive attention \citep{irie19:trafolm, tsai2019} in some Transformer language models).
The time information is naturally encoded into the key and value vectors (except for the key vector in the first layer) tracking the recurrence of the training process in time.
\end{remark}

\begin{remark}[\textbf{Unlimited memory size is not necessarily useful}]
This duality also illustrates an important fact (which might seem counter-intuitive at first glance) that systems which store everything in memory by increasing its size for each new event ($S_2$) are not necessarily better than those with a fixed size storage ($S_1$).
The retrieval mechanism has to be powerful enough to exploit the stored memory.
Potentially, we might obtain better models
by using other more powerful kernels (e.g., softmax, like in the standard Transformers)
in the $\Attention$ computation in Eq.~\ref{eq:dual_linear}.
That would, however, require to use the dual form even during training, which is prohibitive.
\end{remark}

\begin{remark}[\textbf{Orthogonal Inputs}]
If an input $\vx$ was orthogonal to all training input patterns, the output of the linear layer would be $\mW_0 \vx$ (as $\Attention(\mX, \mE, \vx)=0$ in Eq.~\ref{eq:dual_linear}),
i.e., the weights learned by gradient descent would not contribute to the output.
\end{remark}

\begin{remark}[\textbf{Non-Uniqueness}]
The expression of a linear layer as an attention system is not unique.
Some tensor product decompositions can be applied to a trained weight matrix to obtain a more compact attention system.
A clear benefit of the one presented in Corollary \ref{col}, however, is that it explicitly relates test inputs to training datapoints.
\end{remark}

\begin{remark}[\textbf{Self-Attention as Two Level Retrieval}]
Since this duality is valid for any linear layer trained by gradient descent,
and such a linear layer is ubiquitous in any NN,
viewing a well known NN architecture from this view could give extra insights and interpretation.
For example, the main transformation in a common self-attention \citep{trafo} is a linear projection layer which transforms the input to key/value/query vectors.
Under the dual form, this can be viewed as a hierarchical retrieval layer where the projection layer conducts a first level of retrieval using unnormalised dot attention on the training datapoints.
The second, regular softmax attention conducts a second level of retrieval among those selected by the first level.
\end{remark}

\section{Related Work}
\label{sec:rel}

\paragraph{Relating Kernel Machines and NNs.}
As stated in Sec.~\ref{sec:intro}, the theoretical results we show in Sec.~\ref{sec:prel} and \ref{sec:main} are
special cases or trivial extensions of the lines of works connecting kernel machines and neural networks derived and presented in different contexts.
Most importantly, it is well known that the perceptron \citep{rosenblatt1958} has primal and dual forms as pointed out by \citet{aizerman1964theoretical}.
The corresponding result is often presented in the case 
where the co-domain of the linear layer is one-dimensional (i.e., the weight matrix reduces to a weight vector),
especially in the literature on support vector and kernel machines \citep{BoserGV92, burges1998tutorial} and in textbooks \citep{scholkopf2002learning, bishop2006PRML}.
In Sec.~\ref{sec:main}, we present the result in the case of linear layers with matrix weights and expressed it in the form of key-value memory which is particularly relevant today \citep{trafo}.
But we note that the same statement could be made using the language of kernel machines as the unnormalised dot attention operation can be expressed as:
\begin{eqnarray}
\label{eq:kernel}
\Attention(\mX, \mA, \vx) + \vb = \sum_{t=1}^{T} \va_t K(\vx_t, \vx) + \vb
\end{eqnarray}
where $K$ denotes the dot product, $\va_t \in \mathbb{R}^{\text{out}}$ and $\vb \in \mathbb{R}^{\text{out}}$ are model parameters.

More recent work of \citet{domingos2020every} presents
a more general statement: an entire multi-layer perceptron can be approximated by a kernel machine.
While this is a powerful theoretical result, practical
ways of exploiting it are yet to be investigated.
We study instead the dual form of each linear layer in the network which can be obtained without any approximation.

\paragraph{Transformer Feedforward Block as Key-Value Memory.}
Another related but different line of works studies the   feedforward
block of Transformers as a key-value memory.
This feedforward block consists of two feedforward layers which transform input vector $\vx \in \mathbb{R}^{d_{\text{in}}}$ as follows:
\begin{eqnarray}
\FFN(\vx)=  \mW_2 \relu(\mW_1\vx)
\end{eqnarray}
with weight matrices $\mW_1 \in \mathbb{R}^{d_{\text{ff}} \times d_{\text{in}}}$ and $\mW_2 \in \mathbb{R}^{d_{\text{in}} \times d_{\text{ff}}}$,
where $d_{\text{ff}}$ denotes the inner dimension (we omit the biases).
The possibility to interpret this block as a key-value memory with attention
has been pointed out by the original authors of Transformers \citep{trafo}\footnote{See Appendix ``Two feedforward Layers = Attention over Parameter'' in version arXiv:1706.03762v3 of \citet{trafo}.}.
Essentially, replacing relu by a softmax,
\begin{eqnarray}
\mW_2 \softmax(\mW_1\vx)
\end{eqnarray}
or in our context by removing relu, we can write it down using unnormalised attention
defined by Definition \ref{def:attn} as:
\begin{eqnarray}
\mW_2 \mW_1\vx = \Attention(\mW_1^\intercal, \mW_2, \vx)
\end{eqnarray}
Using this formulation, \citet{sukhbaatar2019augmenting} proposed to merge the feedforward block and the attention layer by extending the context-dependent key/value vectors
in the regular self-attention with a fixed set of trainable key/value vectors.
More recently, \citet{gevaetal2021transformer} asked
the question what information from the training data these key vectors (i.e., rows of $\mW_1$) contain.
They compare the corresponding keys to activation vectors of training examples, which are obtained by computing the forward pass of the already trained model on the training examples to be analysed.
They conduct such analyses for Transformer language models.

The view we explore here is different.
Our statement is not limited to Transformer feedforward blocks, but to any linear layers trained by gradient descent,
and  we directly express a linear layer as a function of the training datapoints.

\paragraph{Attention and Kernels.}
A number of recent works connect
attention and kernels \citep{tsai2019, katharopoulos2020transformers, choromanski2020rethinking, peng2021random}.
While we also implicitly exploit the corresponding connection (Eq.~\ref{eq:kernel}),
our focus is on connecting the primal form, i.e., linear layers trained by gradient descent, to key-value/attention systems.

\paragraph{Fast Weight Programmers and Linear Transformers.}
As we noted while discussing the complexity
of the dual form in Sec.~\ref{sec:main}, the conversion from the primal to the dual form of a linear layer we explore here (Proposition \ref{prop1}) is
used in prior works connecting Transformers with linearised attention and fast weight programmers \citep{ba2016using, katharopoulos2020transformers, schlag2021linear}.
\citet{ba2016using} show the equivalence of unnormalised attention to fast weight programmers of the '90s \citep{schmidhuber1993reducing}.
\citet{katharopoulos2020transformers} convert the self-attention in Transformers---whose complexity is quadratic in sequence length---to a linear-complexity linear layer with fast weights \citep{Schmidhuber:91fastweights}.


\section{Experiments}
\label{sec:exp}
We posit that the dual formulation of linear layers (Corollary \ref{col}) offers
a possibility to visualise how different training datapoints (i.e., a set of input/error signal pairs seen during training) contribute 
to some NN's test time predictions.
Here we experimentally support this claim.
We effectively study NNs in their dual form and conduct analyses based on the attention weights and related metrics described below.
We consider three different scenarios: single-task training, multi-task \textit{joint} training, and multi-task \textit{continual} training
for image classification using feedforward NNs with two hidden layers.
We also conduct experiments with language modelling using a one-layer LSTM recurrent NN.

\subsection{Common Settings}
We first describe settings which are common to all our experiments.
Since our primary interest is to analyse attention weights of Eq.~\ref{eq:dual_linear} which are simply dot products between
a test \textit{query} $\vx$ and each of the training \textit{keys} $\vx_t$,
the general idea is to train an NN on a given dataset by backpropagation
as usual,
while recording all inputs to each linear layer during training\footnote{For analyses requiring information about the error signals $\ve_t$
(e.g.,~their norm), we also need to record those vectors.}.
After training, attention weights for the trained model for any test example for all training datapoints can be computed by forwarding the test example to the model, to obtain the input vector (test query) to each linear layer, then, we compute the corresponding dot products with the stored training inputs (training keys).
We note that the training procedure does not change.

Since storing the inputs to each linear layer for the entire training experience can be highly demanding in terms of disk space,
we work with small datasets:
MNIST \citep{lecun1998mnist} and Fashion-MNIST \citep{xiao2017fashion}
for image classification\footnote{We also considered CIFAR-10 \citep{krizhevsky} but omitted it as we did not achieve accuracies above 50\% using small feedforward NNs.}, WikiText-2 \citep{merity2016pointer} and a small public domain book for language modelling.
This allows for obtaining well-performing models with small size within a reasonably small number of training steps.

Our model for image classification has two hidden layers with 800 nodes, each using relu activation functions after each layer.
This means that the model has three linear layers which we denote as \textit{layer-0}, \textit{layer-1}, and \textit{layer-2}.
The first layer-0 transforms a gray-scale input image of size 768 (28$\times$28) to a 800-dimensional hidden state, layer-1 transforms it to another 800-dimensional hidden state, and finally layer-2 projects it to a 10-dimensional output.
The total storage of training patterns is thus roughly 3$\times$800$\times T$ units,
where $T$ is the ``number of training datapoints'' counting  all examples across all training mini-batches.
We train this model using the vanilla stochastic gradient descent optimiser.
We specify relevant metrics in the respective sections below.
Many more examples with visualisation are shown in Appendix \ref{app:examples}.

\begin{figure}[h]
\hspace{8mm}
	\subfloat[MNIST class 6]{
		\centering
		\includegraphics[width=.3\linewidth]{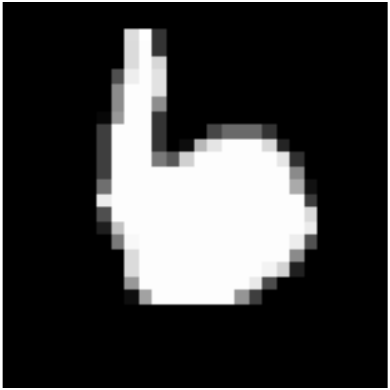}
		\label{sfig:input_mnist}
	}
\hspace{10mm}
	\subfloat[F-MNIST class 9]{
		\centering
		\includegraphics[width=.3\linewidth]{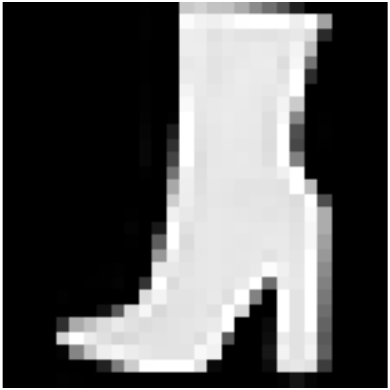}
		\label{sfig:input_f_mnist}
	}
	\caption{\it Test examples used in Figures \ref{fig:mnist_heatmaps} \& \ref{fig:mnist_per_class_sum} (single task case) and Figures \ref{fig:f_mnist_heatmaps} \& \ref{fig:f_mnist_per_class_sum} (multi-task joint training case), respectively.}
	\label{fig:input_image}
		\vspace{-3mm}
\end{figure}

\begin{figure*}[ht]
	\subfloat[layer-0]{
		\centering
		\includegraphics[width=.32\linewidth]{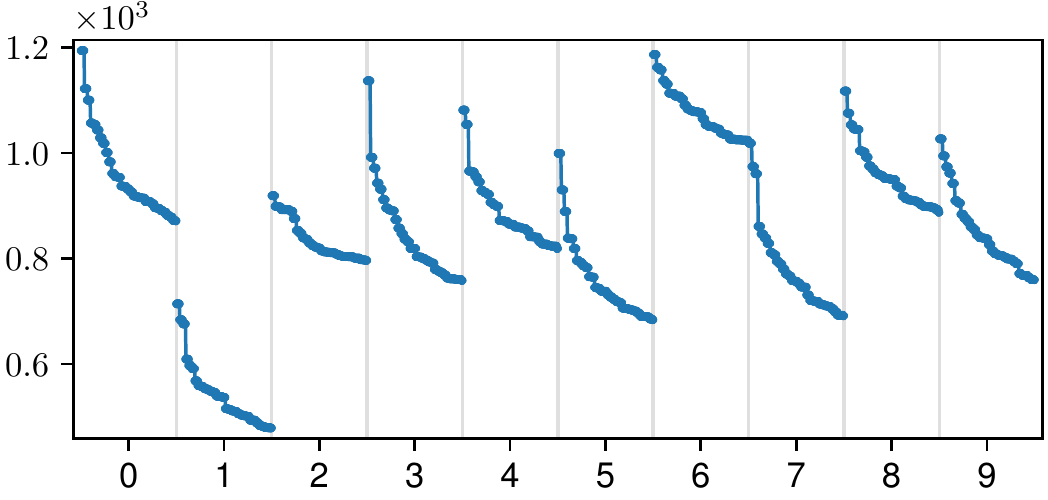}
		\label{sfig:mnist_layer0}
	}
	\subfloat[layer-1] {
		\centering
		\includegraphics[width=.32\linewidth]{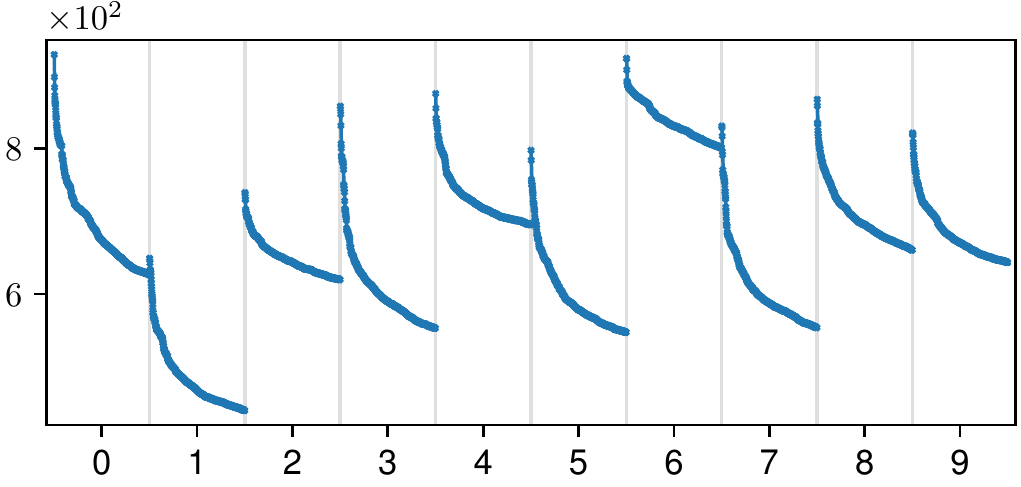}
		\label{sfig:mnist_layer1}
	}
	\subfloat[layer-2]{
		\centering
		\includegraphics[width=.32\linewidth]{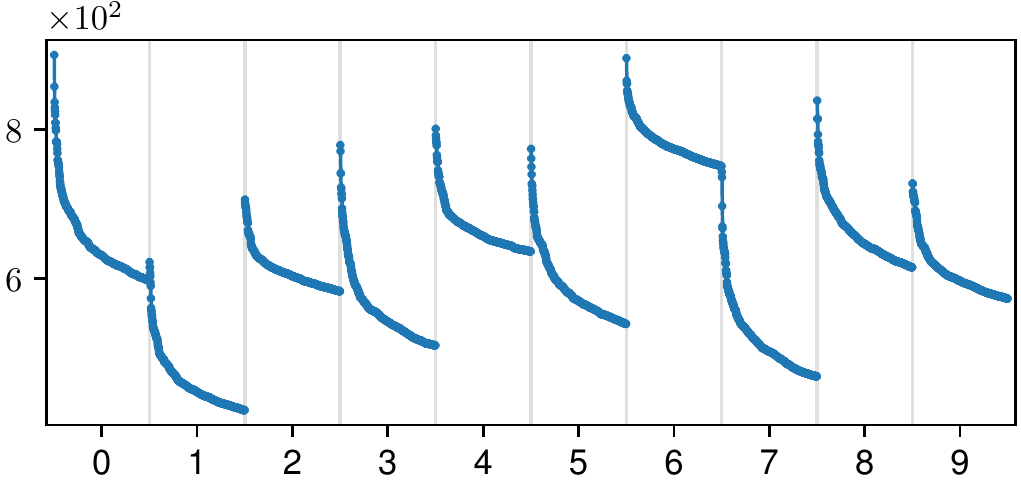}
		\label{sfig:mnist_layer2}
	}
	\caption{\it Attention weights over training examples for the input test example from class 6 (Figure \ref{sfig:input_mnist}) for the single task case on MNIST. The x-axis is partitioned by class, and for each class, top-500 datapoints sorted in descending order are shown.}
	\label{fig:mnist_heatmaps}
		\vspace{-3mm}
\end{figure*}

\begin{figure*}[ht]
	\subfloat[layer-0]{
		\centering
		\includegraphics[width=.32\linewidth]{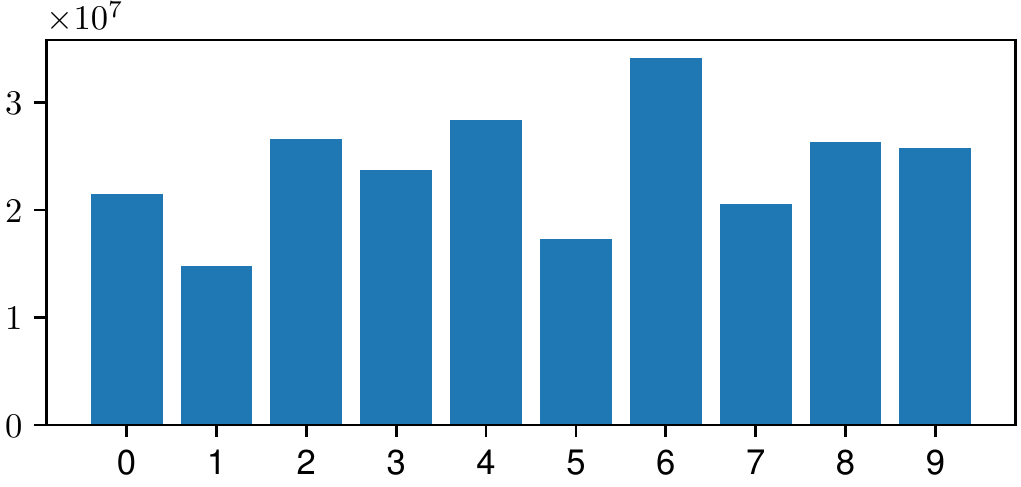}
		\label{sfig:mnist_layer0_sum}
	}
	\subfloat[layer-1] {
		\centering
		\includegraphics[width=.32\linewidth]{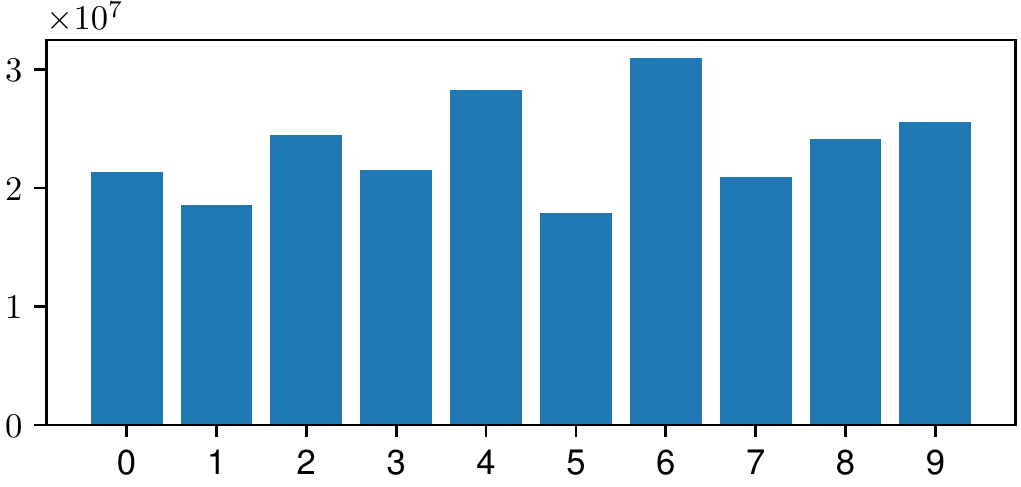}
		\label{sfig:mnist_layer1_sum}
	}
	\subfloat[layer-2]{
		\centering
		\includegraphics[width=.32\linewidth]{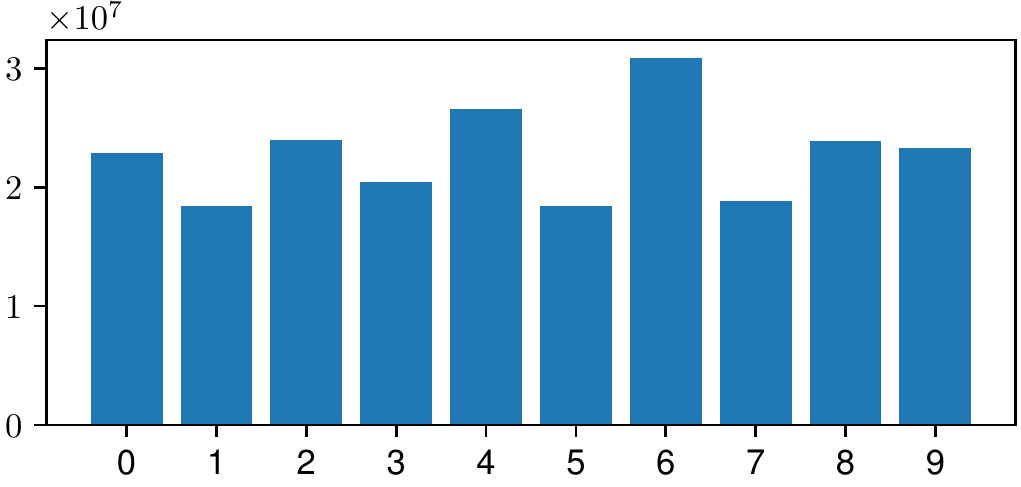}
		\label{sfig:mnist_layer2_sum}
	}
	\caption{\it Total scores per class 
	for the input test example from class 6 (Figure \ref{sfig:input_mnist})
	for the single task case on MNIST.}
	\label{fig:mnist_per_class_sum}
		\vspace{-3mm}
\end{figure*}

\begin{figure*}[ht]
	\subfloat[layer-0]{
		\centering
		\includegraphics[width=.32\linewidth]{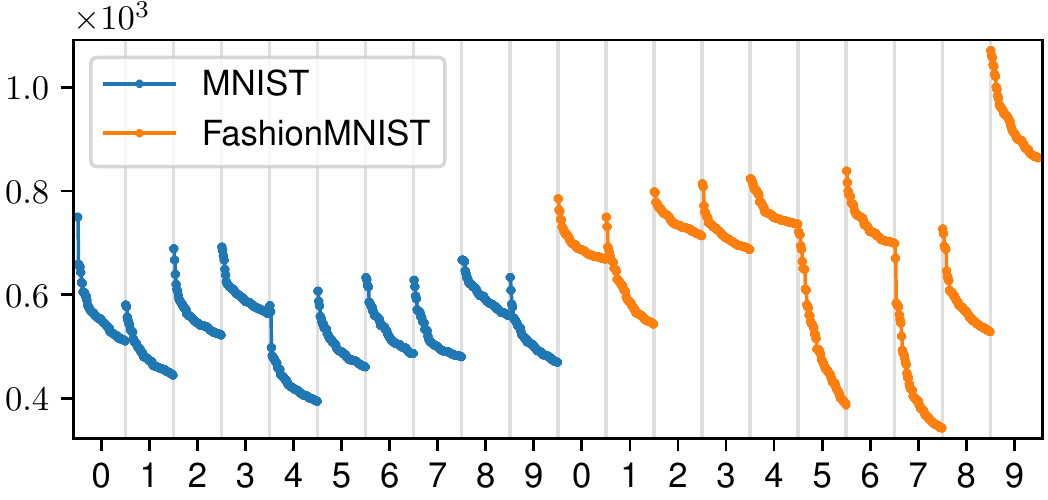}
		\label{sfig:f_mnist_layer0}
	}
	\subfloat[layer-1] {
		\centering
		\includegraphics[width=.32\linewidth]{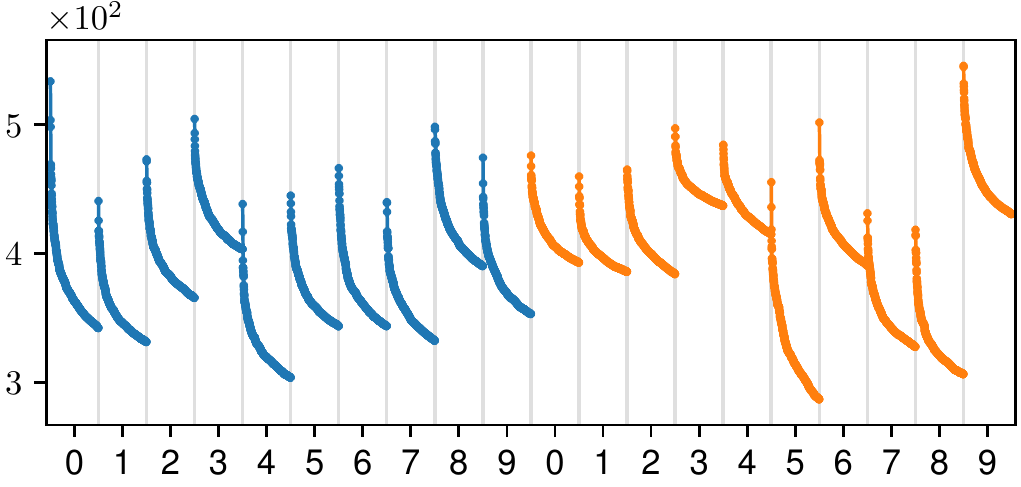}
		\label{sfig:f_mnist_layer1}
	}
	\subfloat[layer-2]{
		\centering
		\includegraphics[width=.32\linewidth]{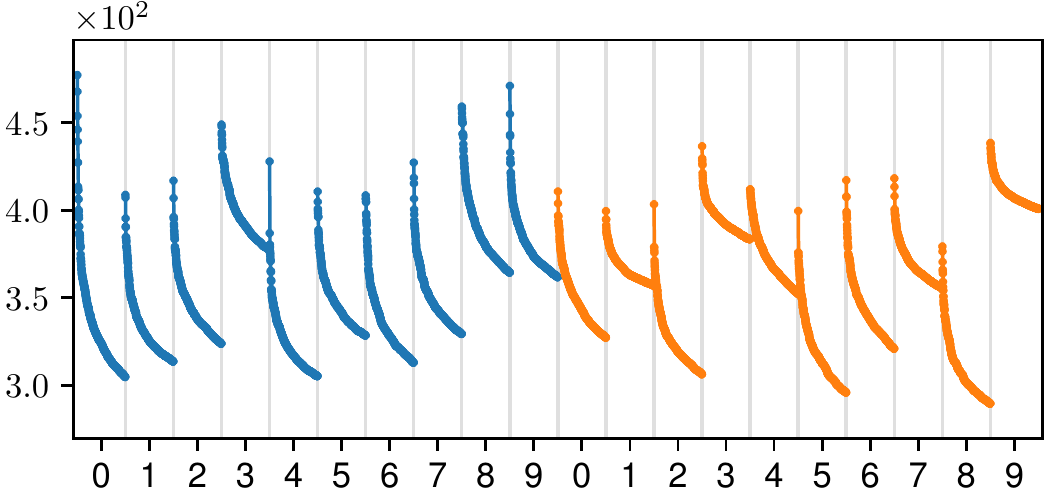}
		\label{sfig:f_mnist_layer2}
	}
	\caption{\it Attention weights over training examples for the input test example from class 9 of F-MNIST (Figure \ref{sfig:input_f_mnist}) in the \textbf{joint training} case. The x-axis is partitioned by class (for each task), and for each class, top-500 datapoints sorted in descending order are shown.}
	\label{fig:f_mnist_heatmaps}
		\vspace{-3mm}
\end{figure*}

\begin{figure*}[ht]
	\subfloat[layer-0]{
		\centering
		\includegraphics[width=.32\linewidth]{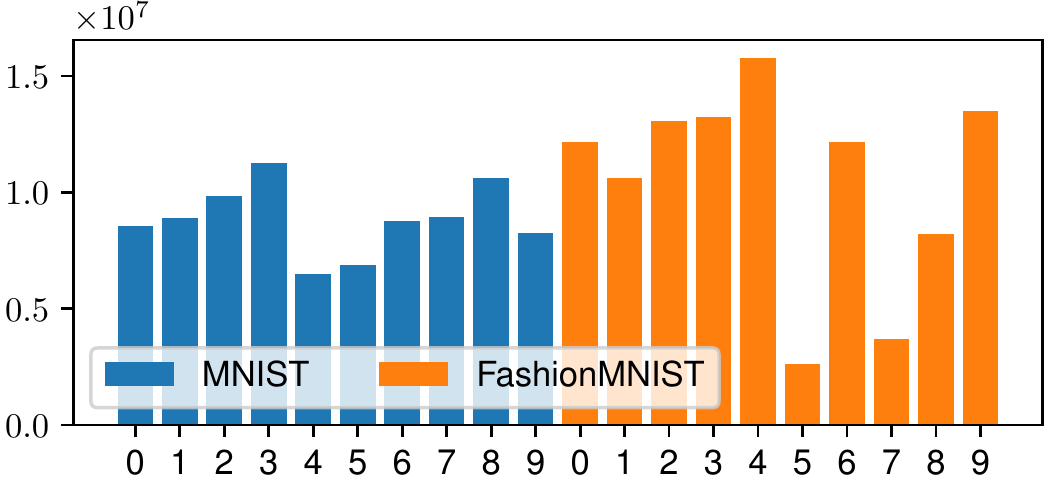}
		\label{sfig:f_mnist_layer0_sum}
	}
	\subfloat[layer-1] {
		\centering
		\includegraphics[width=.32\linewidth]{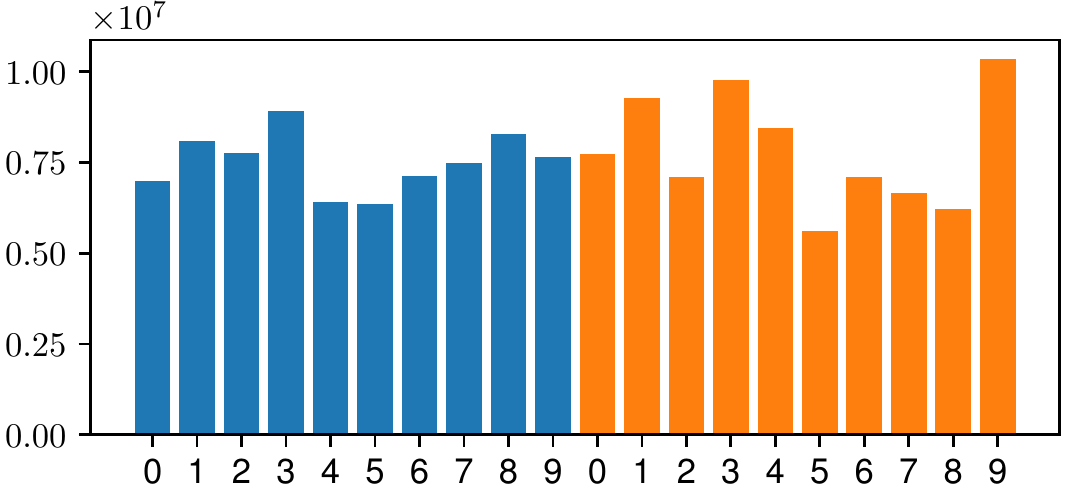}
		\label{sfig:f_mnist_layer1_sum}
	}
	\subfloat[layer-2]{
		\centering
		\includegraphics[width=.32\linewidth]{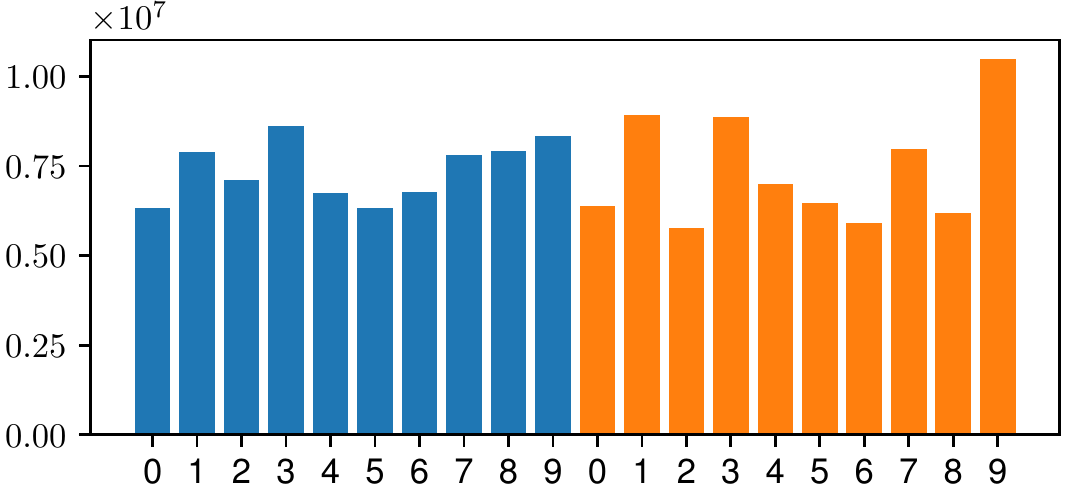}
		\label{sfig:f_mnist_layer2_sum}
	}
	\caption{\it Total scores per class 
	for the input test example from class 9 of F-MNIST (Figure \ref{sfig:input_f_mnist})
	in the \textbf{joint training} case.}
	\label{fig:f_mnist_per_class_sum}
		\vspace{-3mm}
\end{figure*}

\subsection{Single Task Case}
\label{sec:single}

We start with the simple setting of image classification
where we train the two-hidden layer feedforward NN described above on the MNIST dataset \citep{lecun1998mnist}.
This first analysis also serves as the base case for
introducing the general idea and key metrics of our study.
The model is trained for 3\,K updates using a batch size of 128 which correspond to 384\,K-long training key/value memory slots.
The resulting model achieves 97\% accuracy on the test set.

The first plots we visualise are the \textit{attention weight ``distribution''} over training datapoints shown in Figures \ref{sfig:mnist_layer0}-\ref{sfig:mnist_layer2}.
The test sample fed to the model in this example is from class 6 (which is digit ``6''), shown in Figure \ref{sfig:input_mnist}, which is correctly classified by the model.
In these plots, datapoints are grouped by target class, and for each class, datapoints achieving the top-500 highest scores are shown in descending order (thus, a total of 5\,K datapoints out of 384\,K are shown) for each layer\footnote{We note that in layer-0, which is the input layer, the scores are simply dot products between the flattened raw image data vectors.}.
While the scores are unnormalised, these plots yield a qualitative picture of the attention ``distributions'' over classes on the level of datapoints.

However, this picture showing only the top-few-percent datapoints does not capture how much of the attention weights go to which class \textit{in total}.
In fact, in many cases, we notice that the training datapoints which get the highest scores are not necessarily from the correct class
(as these top scores may quickly decrease with the rank),
while the model's output prediction is correct.
For instance, in the example shown in Figures \ref{sfig:mnist_layer0}-\ref{sfig:mnist_layer2},
the input is an image from class 6 of MNIST (Figure \ref{sfig:input_mnist}).
We indeed see that training examples corresponding to class 6 are attended with high weights in all layers, but we also see that some of the datapoints for class 0 achieve comparable or higher scores than those from the correct class 6.
The corresponding top scoring examples are visualised in Figures \ref{sfig:mnist_layer0_top1}-\ref{sfig:mnist_layer2_top2} in Appendix \ref{app:top} which contain images from class 0.
In order to visualise the \textit{total} attention weights assigned to each class,
we present Figures \ref{sfig:mnist_layer0_sum}-\ref{sfig:mnist_layer2_sum} which show the sum\footnote{Here the sum is the sum of absolute values of attention scores for the input layer,
for which the scores might be negative.
For other layers, relu ensures positivity.} of attention scores per class.
Here we observe that training datapoints from the correct class label 6 are the most attended datapoints overall.
By examining multiple cases, we observe that the sum of attention weights effectively correlates well with the model output.
To quantify this observation, we compute the corresponding correlation on the entire test set.
Table \ref{tab:correlation} shows the results.
We observe that when the model's prediction is correct,
the total attention weights correlate well with the model output, especially in the late layers.

\begin{table}[h]
\caption{Prediction accuracy (\%) of the true target label (Target) and model output (Output) from argmax of the per-class attention scores (like those in Figure \ref{fig:mnist_per_class_sum}),
shown for each layer for two cases whether the model's output prediction is correct.
Mean and standard deviation computed for 5 runs.
Layer-0 is the input layer.
}
\label{tab:correlation}
\begin{center}
\begin{tabular}{llrr}
\toprule
 &  & \multicolumn{2}{c}{Is Model Prediction Correct?} \\  \cmidrule(r){3-4}
Layer &  & \multicolumn{1}{c}{No} & \multicolumn{1}{c}{Yes} \\ \midrule
\multirow{2}{*}{0} & Target  & 17.2 $\pm$ 2.7 & \multirow{2}{*}{75.1 $\pm$ 0.0} \\
     & Output  & 49.5 $\pm$ 1.5 &  \\ \midrule
\multirow{2}{*}{1} & Target  & 18.0 $\pm$ 2.9 & \multirow{2}{*}{78.8 $\pm$ 0.8} \\
     & Output  & 52.9 $\pm$ 2.9 &  \\ \midrule
\multirow{2}{*}{2} & Target  & 20.6 $\pm$ 2.6 & \multirow{2}{*}{84.7 $\pm$ 1.1} \\
     & Output  & 60.1 $\pm$ 4.5 &  \\
\bottomrule
\end{tabular}
\end{center}
\end{table}

As a side note, we also considered an alternative version where we augment the attention weights with the norm of the error vector (value vector in Eq.~\ref{eq:dual_linear}).
However, the resulting plots did not show any consistent trend (presumably because the norm disregards the signs of each component of error/value vectors).
In fact, in general, the heatmap of attention weights (e.g.,~\citet{trafo, bahdanau2014neural}) often reported in the literature also completely disregards any information about the value vectors.
Our analysis is thus also purely based on attention weights, i.e., interaction between a test query and training key vectors.

Regarding the time information in layer-1 and layer-2 (it
does not matter in the input layer-0, since it is simply a dot product between the raw images 
which contains no such information),
we find the distribution over time to be rather spread out and noisy among the top-500 highest scoring datapoints.
We do not observe a clear pattern indicating that the most recent training datapoints are more important.

\subsection{Multi-Task Case}
\label{sec:joint}
Now we extend the experimental setting above by training models with the same architecture as above, but jointly on two datasets: MNIST and Fashion-MNIST (F-MINST for short).
The output dimension remains 10.
The model is trained for 5,000 steps and achieves 97\% and 87\% test accuracy on MNIST and F-MNIST, respectively.

In case of joint training, a natural question
to ask is how the attention behaves with two datasets and how much cross-task attention occurs (i.e.,~high attention scores when key and query are from different datasets).
Intuitively, since the output representation is shared (labels between 0 and 9),
representations learned by the late layers might be
task-independent, and cross-task attention might take place.
We experimentally observe that this is indeed the case.
Figures \ref{sfig:f_mnist_layer0}-\ref{sfig:f_mnist_layer2} show the attention weights over training datapoints in the case of joint training, analogous to Figures \ref{sfig:mnist_layer0}-\ref{sfig:mnist_layer2} for the single task case.
The x-axis now shows 20 groups representing 10 classes from MNIST and 10 from F-MNIST.
Figure \ref{sfig:input_f_mnist} shows the test example fed to the model which is from class 9 of F-MINST (``ankle boot'').
On Figures \ref{sfig:f_mnist_layer0}-\ref{sfig:f_mnist_layer2}, we observe that, while in layer-0,
top matching training datapoint are dominated by F-MNIST datapoints from the same class, in layer-1 and layer-2, top attention weights are more distributed across two tasks.
Especially in layer-2, the top-3 matching
examples (shown in Figures \ref{sfig:f_mnist_layer0_top1}-\ref{sfig:f_mnist_layer2_top2} in Appendix \ref{app:top}) contain datapoints from MNIST, one of them belonging to class 9.
Analogous to the single task case, Figures \ref{sfig:f_mnist_layer0_sum}-\ref{sfig:f_mnist_layer2_sum} show the sum of weights for each class (separately for each dataset).
Another interesting observation: while the class achieving the highest total attention scores in layer-0 is class 4 (``coat'') of F-MNIST (computed as a dot product between the raw images),
in later layers, the score of the correct label 9 is increased,
which is not surprising if we expect the late layers to yield better representations of inputs.
This is a visual illustration of the results we saw in Table \ref{tab:correlation} where the argmax of the sum of attention weights correlates better with the model output in the late layers.

\begin{figure*}[ht]
	\subfloat[layer-0]{
		\centering
		\includegraphics[width=.32\linewidth]{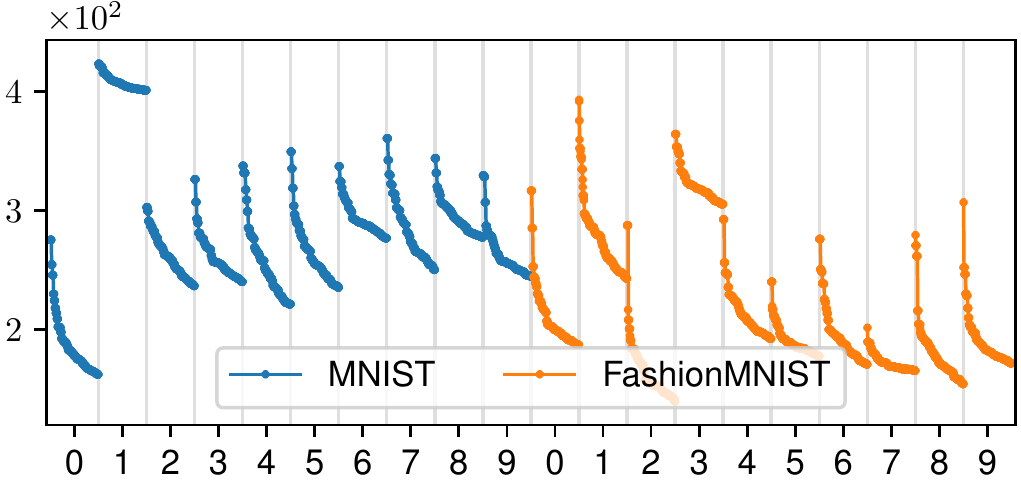}
		\label{sfig:mis_mnist_layer0}
	}
	\subfloat[layer-1] {
		\centering
		\includegraphics[width=.32\linewidth]{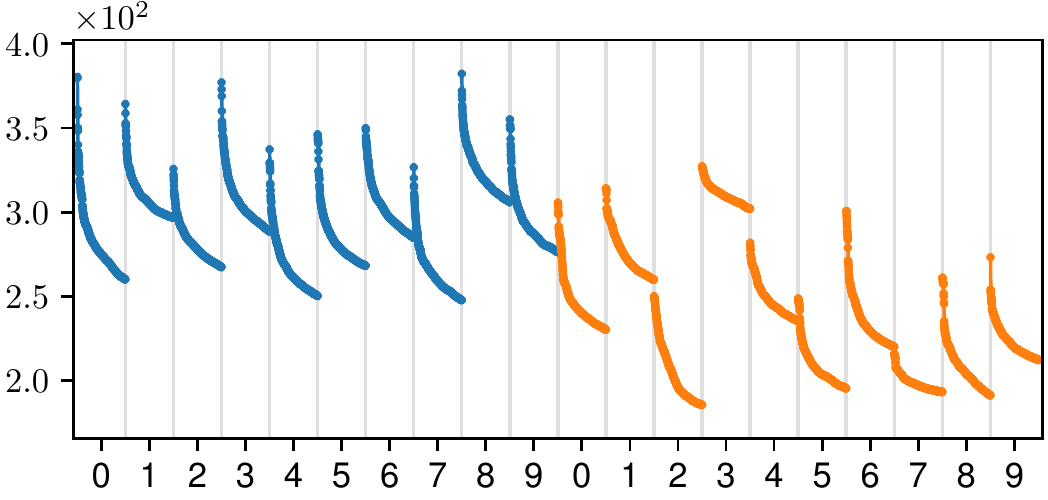}
		\label{sfig:mis_mnist_layer1}
	}
	\subfloat[layer-2]{
		\centering
		\includegraphics[width=.32\linewidth]{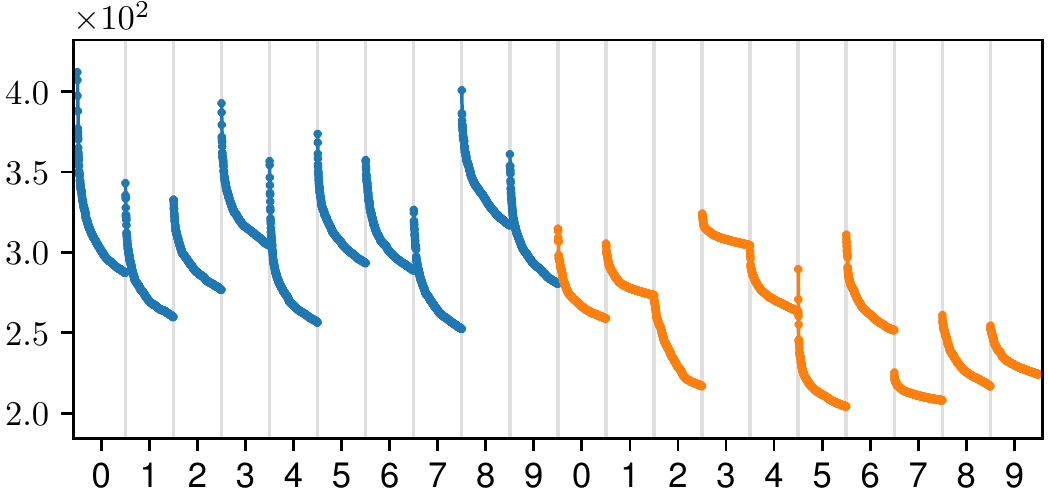}
		\label{sfig:mis_mnist_layer2}
	}
	\caption{\it Plots analogous to Figure \ref{fig:f_mnist_heatmaps} for the input test example from class 1 of MNIST (Figure \ref{sfig:app_input_seq_misclassified}) in the \textbf{continual learning case}.}
 	\label{fig:mis_mnist_heatmaps}
		\vspace{-3mm}
\end{figure*}

\begin{figure*}[ht]
	\subfloat[layer-0]{
		\centering
		\includegraphics[width=.32\linewidth]{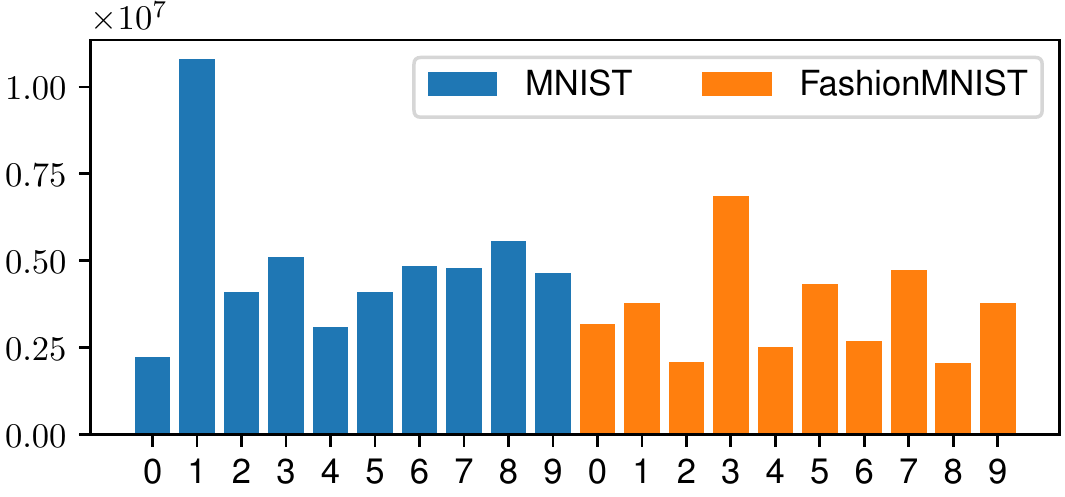}
		\label{sfig:mis_mnist_layer0_sum}
	}
	\subfloat[layer-1] {
		\centering
		\includegraphics[width=.32\linewidth]{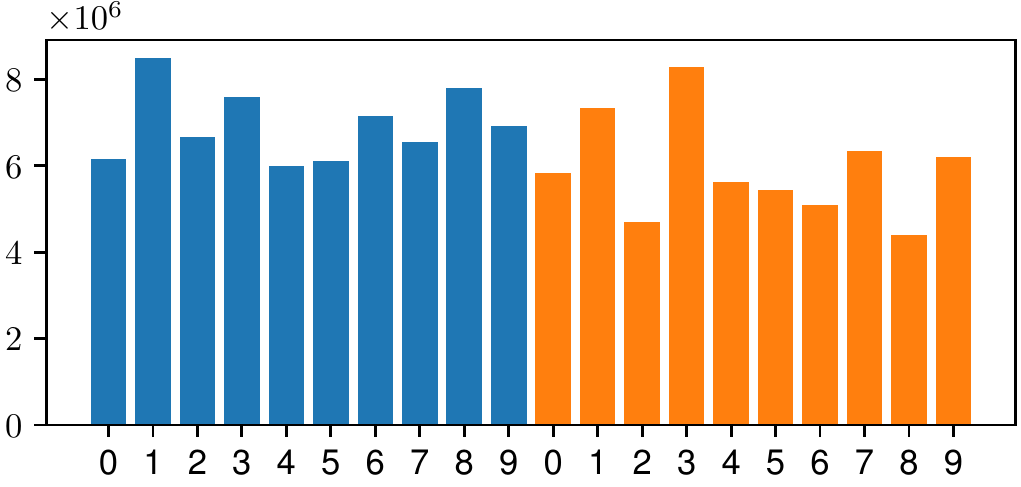}
		\label{sfig:mis_mnist_layer1_sum}
	}
	\subfloat[layer-2]{
		\centering
		\includegraphics[width=.32\linewidth]{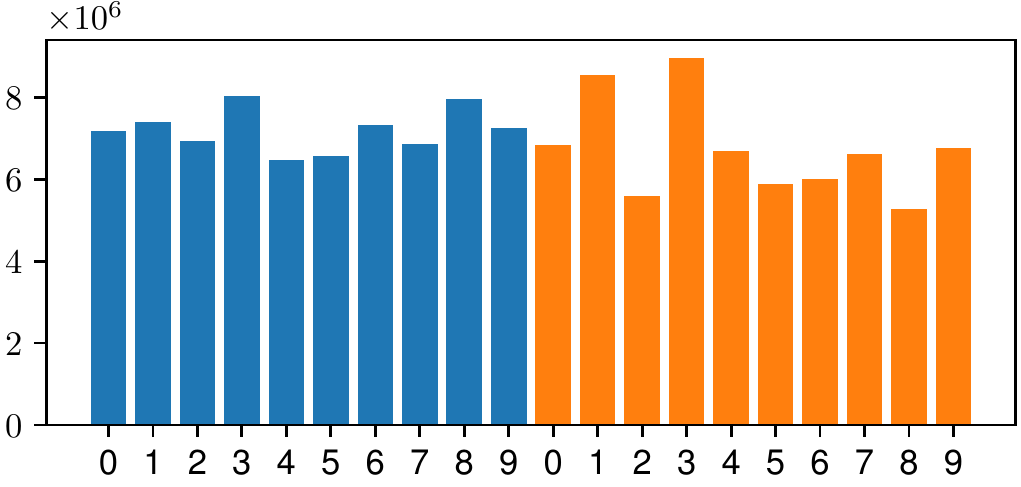}
		\label{sfig:mis_mnist_layer2_sum}
	}
	\caption{\it Total scores per class
	for the input test example from class 1 of MNIST (Figure \ref{sfig:app_input_seq_misclassified})
	in the \textbf{continual training case}.}
 	\label{fig:mis_mnist_sum}
		\vspace{-3mm}
\end{figure*}

\subsection{Continual Learning Case}
\label{sec:continual}
Finally we train models on the two datasets successively
in a continual learning fashion, first  on MNIST for 3,000 steps, then on F-MINST for 3,000 steps.
The final test accuracies are 85\% on F-MNIST and 45\% on MNIST (down from 97\% after training on MNIST only).

The continual learning case is particularly interesting from the perspective of the dual form.
Since all training patterns are stored in the training key/value memory (Eq.~\ref{eq:dual_linear}), 
if we used an attention mask on the F-MNIST part of training keys for the attention computation, the model would be able to reproduce the good performance on MNIST obtained after the first part of training, and thus avoid degradation or more generally catastrophic forgetting \citep{french1999catastrophic}.
But there is no such a mask in practice: using an MNIST test input,
we observe strong attention weights on the F-MNIST part of training patterns,
causing severe interference.

In Figures \ref{fig:mis_mnist_heatmaps} and \ref{fig:mis_mnist_sum}, we present an illustrative example which shows how an MNIST test image clearly representing the digit ``1'' (shown in Figure \ref{sfig:app_input_seq_misclassified} in Appendix \ref{app:examples}) is wrongly recognised as ``3'' through interference of F-MNIST class 3 (see an example in Figure \ref{sfig:app_input_seq_fmnist}) which is ``dress''.
In the attention maps in Figure \ref{sfig:mis_mnist_layer0_sum}, we observe that the
MNIST class 1 training examples are highly activated, so are the class 3 of F-MNIST.
As we move up to layer-1 and layer-2 (Figures \ref{sfig:mis_mnist_layer1_sum} and \ref{sfig:mis_mnist_layer2_sum}), the attention becomes more distributed across two tasks, and we finally observe that in layer-2 (Figure \ref{sfig:mis_mnist_layer2_sum}), the class 3 of F-MNIST becomes the most dominant class,
which may explain the model's decision to output class 3.
Otherwise, when the test input is from F-MNIST, we observe trends similar to the one reported for the joint training case in  Sec.~\ref{sec:joint}.
Examples for the correctly classified cases can be found in Appendix \ref{app:continual}.

\subsection{Language Modelling}
\label{sec:lm}
The experiments above focus on image data and feedforward NNs.
Here we conduct experiments with texts using a recurrent NN (RNN).
A desirable NLP experiment would be to train a big Transformer language model (LM) on a vast amount of data, and analyse the attention of linear layers over the training datapoints while sampling from the model.
As we can not afford such a gigantic experiment,
we train a one-layer LSTM LM on two small datasets: a tiny public domain book, ``Aesop's Fables'', by J. H. Stickney and the standard WikiText-2 dataset \citep{merity2016pointer}.
The basic approach is similar to the one we introduced above:
we train LMs as usual, while recording inputs to each linear layer.
Here we focus on the linear layer in the LSTM layer (i.e., the single linear transformation which groups all projections including all transformations for the gates).
At test time, we forward the trained LM on a prompt (short text segment) and get the test query from the last token from the prompt.
We compute the attention weights for all (token-level) training datapoints.
Since it is difficult to visualise attention ``distribution'' over different token positions in the entire training tokens in a comprehensive manner, we visualise the text segment around the training datapoints achieving the highest attention weights (we sum all attention weights which go to the same training token position).

Table \ref{tab:wiki2} shows an example from the word-level WikiText-2 experiment.
The test query is from a passage in the Wikipedia page on a warship.
We observe that the training passages achieving the highest attention scores
are also from pages about warships,
and they have a similar sentence structure.
A manual inspection shows that the passage with a similar sentence structure but on an unrelated topic (singer) is not attended, which indicates that the attention here is contextual.
We refer to \textbf{Appendix \ref{app:lm}} for more examples, details, and character-level experiments.
While the scale is limited,
we found all examples interesting and intuitive.
They may be capturing
the essence of how bigger models could be “attending” to different parts of training texts based on some common concept defined by the test query.

\begin{table*}[h!]
\caption{Example test query and top-3 training passages (with their Wikipedia page title) from WikiText-2.
We also show a manually found negative example (which is not among the top scoring passages but has a similar sentence structure).
The test query token and the top scoring training token are highlighted in \bluet{bold}. The query is from the test text.
We refer to \textbf{Appendix \ref{app:lm}} for more examples.
}
\label{tab:wiki2}

\begin{center}
\begin{tabular}{l|l}
\toprule
Query (\textit{Ironclad warship}) & ... Her \bluet{principal} role was for combat in the English Channel and other European ...\\
 \midrule
Top-1 (\textit{Portuguese ironclad}) & Her sailing rig also was removed . Her \bluet{main} battery guns were replaced with new ... \\
Top-2 (\textit{SMS Markgraf}) & ... between the two funnels . Her \bluet{secondary} armament consisted of fourteen 15 cm ...\\
Top-3 (\textit{Italian cruiser Aretusa})  & single mounts . Her \bluet{primary} offensive weapon was her five 450 mm.\\ 
 \midrule
Negative (\textit{Nina Simone})  & ... written especially for the singer . Her \textbf{first} hit song in America was her rendition ... \\
\bottomrule
\end{tabular}
\end{center}
\end{table*}

\section{Discussion and Limitations}
\label{sec:limit}
The dual formulation
allows for explicitly visualising attention weights over all training patterns, given a test input.
While we argue that this view provides a new perspective on analysing neural networks,
it also has several limitations.
First, the memory storage requirement forces us to conduct experiments with small datasets (see more discussion in Appendix \ref{app:scalability}).
On the other hand, storage requirements grow linearly with training set size, 
while computing hardware is still getting exponentially cheaper with time.
That is, soon we may be able to analyse  much larger models trained on much larger datasets.
Second, our analysis is not applicable to models which are already trained.
Furthermore, it is limited to a study of attention weights, in line with traditional visualisations of attention-based systems,  and can only show which training datapoints are combined.
It does not tell how the combined representations can be converted to a meaningful output, e.g., in the case of large generative NNs  mentioned in the introduction. 

\section{Conclusion}
We revisit the dual form of the perceptron for linear layers in deep NNs.
The dual form is expressed in terms of the now popular key/value/attention concepts,
which offers novel insights and interpretability. 
We visualise and study the corresponding attention weights.
This allows for connecting training datapoints to test time predictions,
and observing many interesting patterns in various scenarios, on both image and language modalities.
While our analysis is still limited to relatively small datasets,
it opens up new avenues for analysing and interpreting behaviours of deep NNs. 

\section*{Acknowledgements}
This research was partially funded by ERC Advanced grant 742870, project AlgoRNN,
and by Swiss National Science Foundation grant 200021\_192356, project NEUSYM.
We are thankful for hardware donations from NVIDIA \& IBM. The resources used here were partially provided by Swiss National Supercomputing Centre (CSCS) project s1023.

\bibliography{paper}
\bibliographystyle{icml2022}

\newpage
\appendix
\onecolumn


\section{Top Matching Examples}
\label{app:top}
This section contains figures of
images in the training set achieving the highest attention scores (top-3) in each layer in the single task and multi-task joint training cases.
We refer to these examples in the main text in Sec.~\ref{sec:single} and \ref{sec:joint}.
We note that one example from class 0 (Figure \ref{sfig:mnist_layer0_top1}) is often found in the top-3 matches in different settings we study, presumably as the corresponding image of digit ``0'' highly overlaps with other images in the original image space.

\begin{figure*}[ht]
    \centering
	\subfloat[layer-0, top-1]{
		\centering
		\includegraphics[width=.15\linewidth]{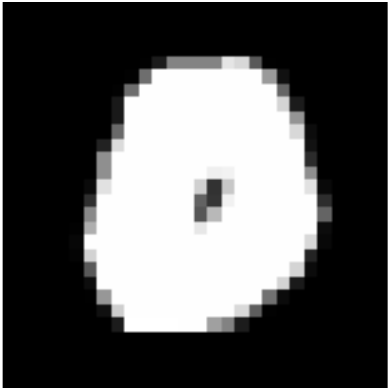}
		\label{sfig:mnist_layer0_top1}
	}
	\subfloat[layer-1, top-1]{
		\centering
		\includegraphics[width=.15\linewidth]{figures/single6_zero.pdf}
		\label{sfig:mnist_layer1_top1}
	}
	\subfloat[layer-1, top-2]{
		\centering
		\includegraphics[width=.15\linewidth]{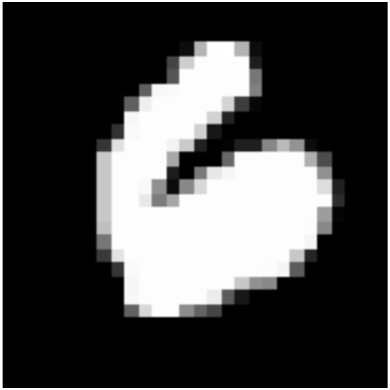}
		\label{sfig:mnist_layer1_top2}
	}
	\subfloat[layer-2, top-1]{
		\centering
		\includegraphics[width=.15\linewidth]{figures/single6_zero.pdf}
		\label{sfig:mnist_layer2_top1}
	}
	\subfloat[layer-2, top-2]{
		\centering
		\includegraphics[width=.15\linewidth]{figures/single6_layer1_top2.pdf}
		\label{sfig:mnist_layer2_top2}
	}
	\subfloat[layer-2, top-3]{
		\centering
		\includegraphics[width=.15\linewidth]{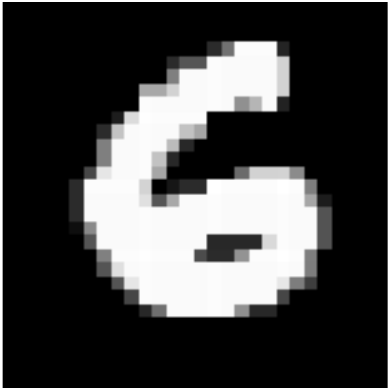}
		\label{sfig:mnist_layer2_top2}
	}
	\caption{\it Top matching training examples 	for the input test example from class 6 (Figure \ref{sfig:input_mnist})
	in the single task case on MNIST.}
	\label{fig:mnist_top_matching}
		\vspace{-3mm}
\end{figure*}

\begin{figure*}[ht]
\centering
	\subfloat[layer-0, top-1]{
		\centering
		\includegraphics[width=.15\linewidth]{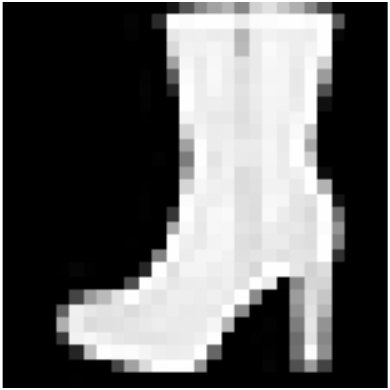}
		\label{sfig:f_mnist_layer0_top1}
	}
	\subfloat[layer-1, top-1]{
		\centering
		\includegraphics[width=.15\linewidth]{figures/single6_zero.pdf}
		\label{sfig:f_mnist_layer1_top1}
	}
	\subfloat[layer-1, top-2]{
		\centering
		\includegraphics[width=.15\linewidth]{figures/booth_match.pdf}
		\label{sfig:f_mnist_layer1_top2}
	}
	\subfloat[layer-2, top-2]{
		\centering
		\includegraphics[width=.15\linewidth]{figures/single6_zero.pdf}
		\label{sfig:f_mnist_layer2_top2}
	}
	\subfloat[layer-2, top-3]{
		\centering
		\includegraphics[width=.15\linewidth]{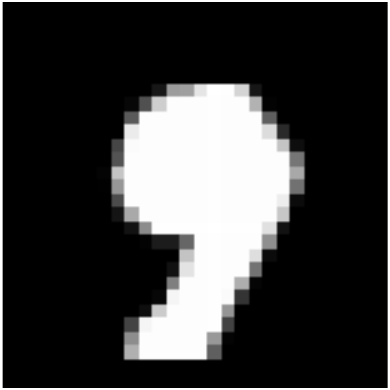}
		\label{sfig:f_mnist_layer2_top2}
	}
	\caption{\it Top matching training examples.
		for the input test example from class 9 of F-MNIST (Figure \ref{sfig:input_f_mnist})
	in the joint training case.}
	\label{fig:f_mnist_top_matching}
		\vspace{-3mm}
\end{figure*}


\section{More Examples/Visualisation}
\label{app:examples}

Here we provide more examples that do not fit the space limitations  of the main text.
All input test examples used to generate figures in the appendix are summarised in Figure \ref{fig:app_all_input}.
We refer to each of them in the subsections below.

\begin{figure*}[ht]
	\subfloat[MNIST, 3]{
		\centering
		\includegraphics[width=.15\linewidth]{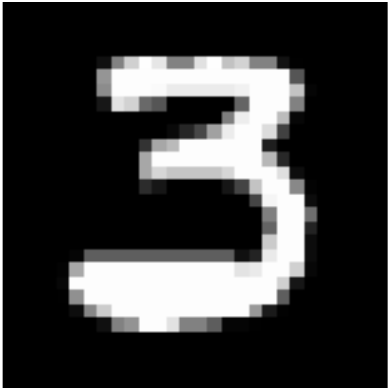}
		\label{sfig:app_input_single}
	}
	\subfloat[F-MNIST, 3 (dress)]{
		\centering
		\includegraphics[width=.15\linewidth]{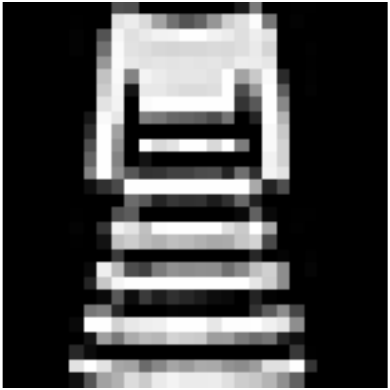}
		\label{sfig:app_input_joint_dress}
	}
	\hspace{1mm}
	\subfloat[F-MNIST, 5 (sandal)]{
		\centering
		\hspace{1mm}\includegraphics[width=.15\linewidth]{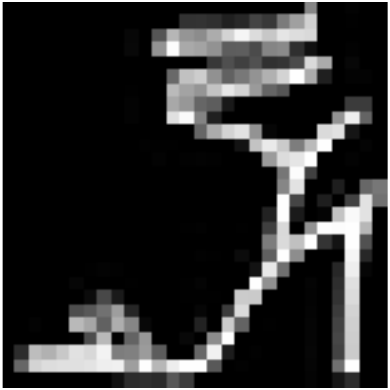}\hspace{1mm}
		\label{sfig:app_input_joint_sandal}
	}
	\subfloat[MNIST, 1]{
		\centering
		\includegraphics[width=.15\linewidth]{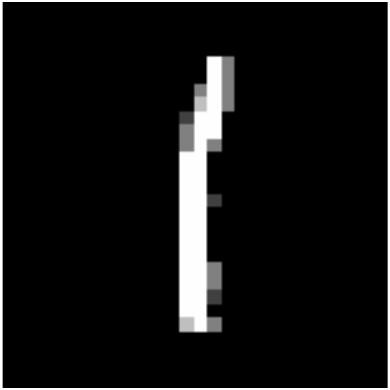}
		\label{sfig:app_input_seq_misclassified}
	}
	\subfloat[F-MNIST, 3 (dress)]{
		\centering
		\includegraphics[width=.15\linewidth]{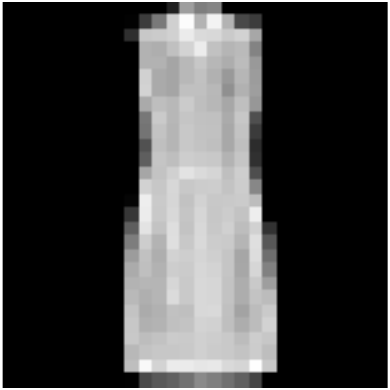}
		\label{sfig:app_input_seq_fmnist}
	}
	\subfloat[MNIST, 7]{
		\centering
		\includegraphics[width=.15\linewidth]{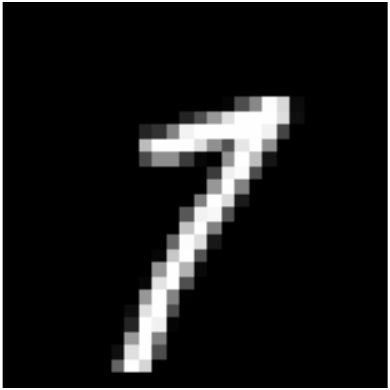}
		\label{sfig:app_input_seq_mnist}
	}
	\caption{\it Input test examples used for various scenarios studied in this appendix.}
	\label{fig:app_all_input}
		\vspace{-3mm}
\end{figure*}

\subsection{Single Task Case}
\label{app:single}

Here we simply show one more example in the single task case discussed in Sec.~\ref{sec:single}.
The input test example used here is shown in Figure \ref{sfig:app_input_single} (digit ``3'').
The plots of attention weights and the corresponding sums per class are show in Figures \ref{fig:app_mnist_heatmaps} and \ref{fig:app_mnist_per_class_sum}.
The observations are similar to those discussed in Sec.~\ref{sec:single}.
The training examples achieving the highest attention scores
in each layer are shown in Figure \ref{app:fig:single}.

\begin{figure*}[ht]
\centering
    \subfloat[layer-0 top-1]{
        \centering
        \includegraphics[width=.2\linewidth]{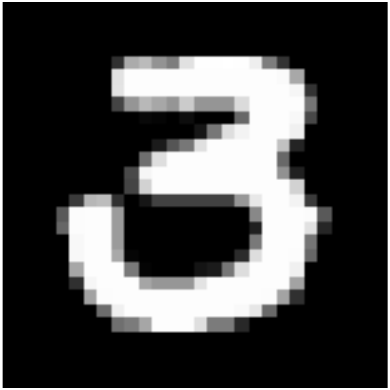}
    }
    \subfloat[layer-0 top-2]{
        \centering
        \includegraphics[width=.2\linewidth]{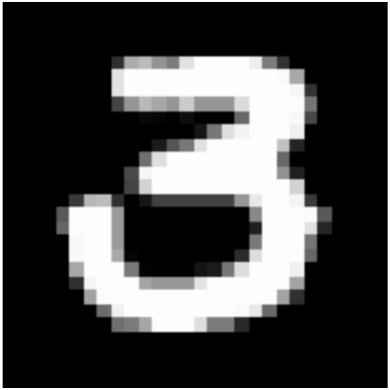}
    }
    \subfloat[layer-0 top-3]{
        \centering
        \includegraphics[width=.2\linewidth]{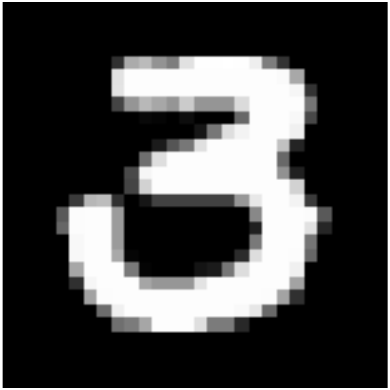}
    }
\\
    \subfloat[layer-1 top-1]{
        \centering
        \includegraphics[width=.2\linewidth]{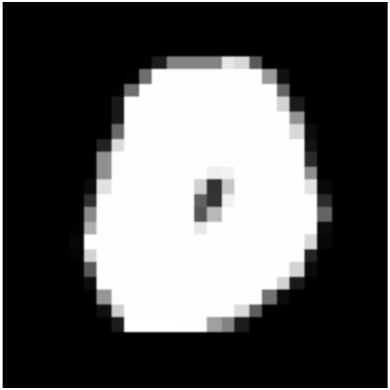}
    }
    \subfloat[layer-1 top-2]{
        \centering
        \includegraphics[width=.2\linewidth]{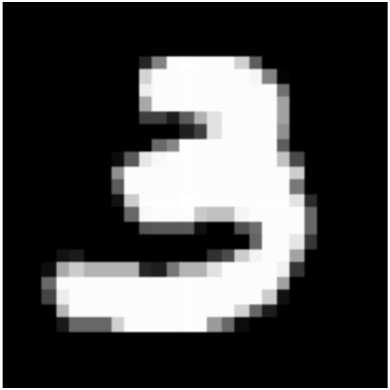}
    }
    \subfloat[layer-1 top-3]{
        \centering
        \includegraphics[width=.2\linewidth]{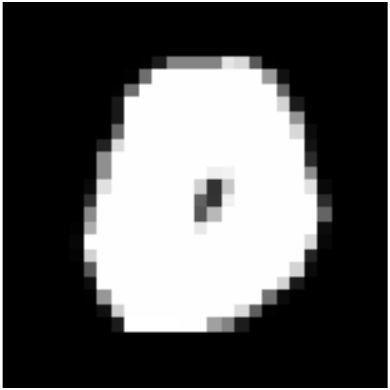}
    }
\\
    \subfloat[layer-2 top-1]{
        \centering
        \includegraphics[width=.2\linewidth]{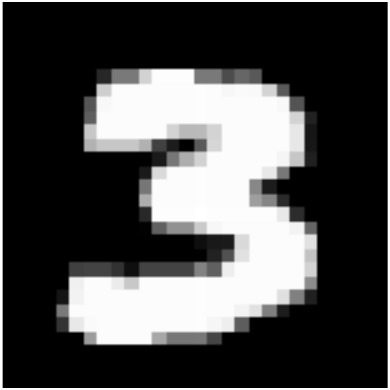}
    }
    \subfloat[layer-2 top-2]{
        \centering
        \includegraphics[width=.2\linewidth]{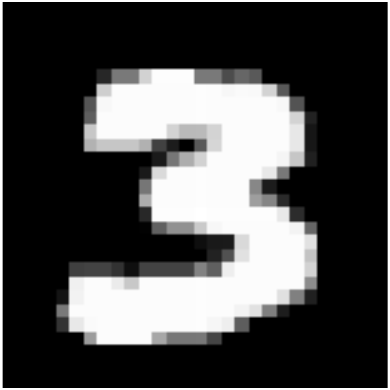}
    }
    \subfloat[layer-2 top-3]{
        \centering
        \includegraphics[width=.2\linewidth]{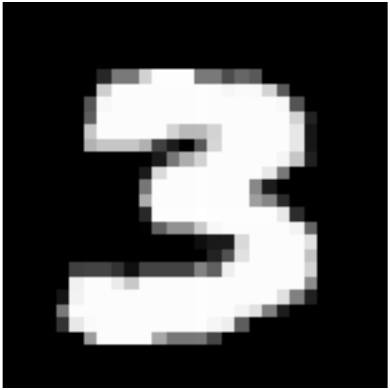}
    }
    \caption{Top scoring examples for the MNIST input of Figure \ref{sfig:app_input_single} in the single task case.}
    \label{app:fig:single}
\end{figure*}

\begin{figure*}[ht]
	\subfloat[layer-0]{
		\centering
		\includegraphics[width=.32\linewidth]{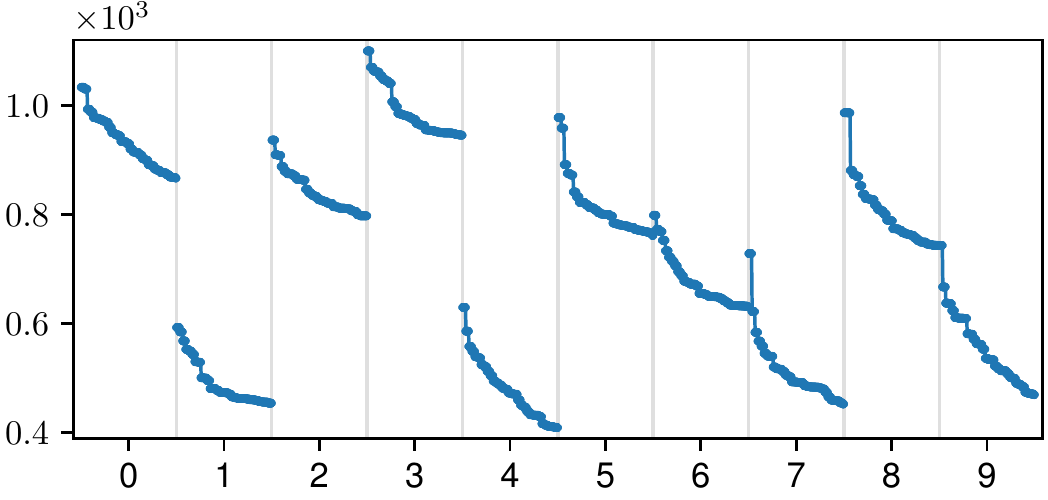}
	}
	\subfloat[layer-1] {
		\centering
		\includegraphics[width=.32\linewidth]{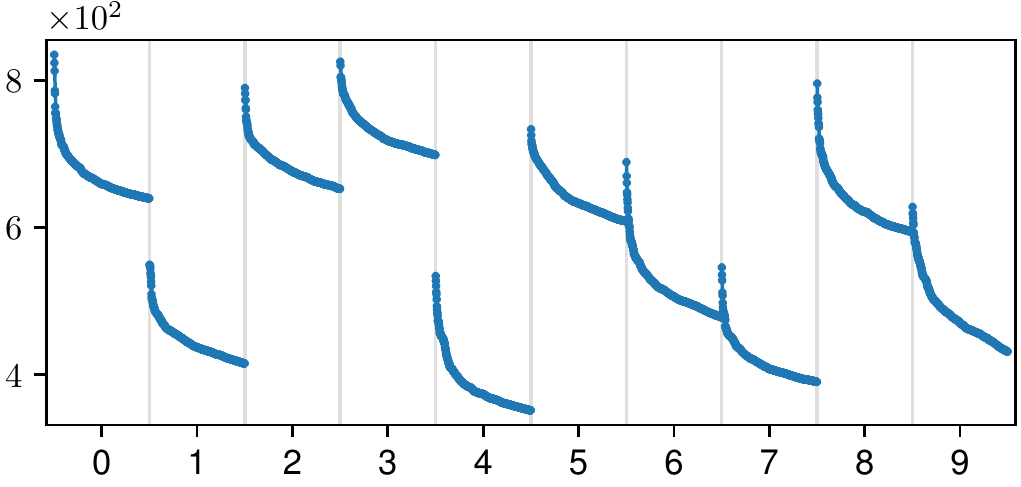}
	}
	\subfloat[layer-2]{
		\centering
		\includegraphics[width=.32\linewidth]{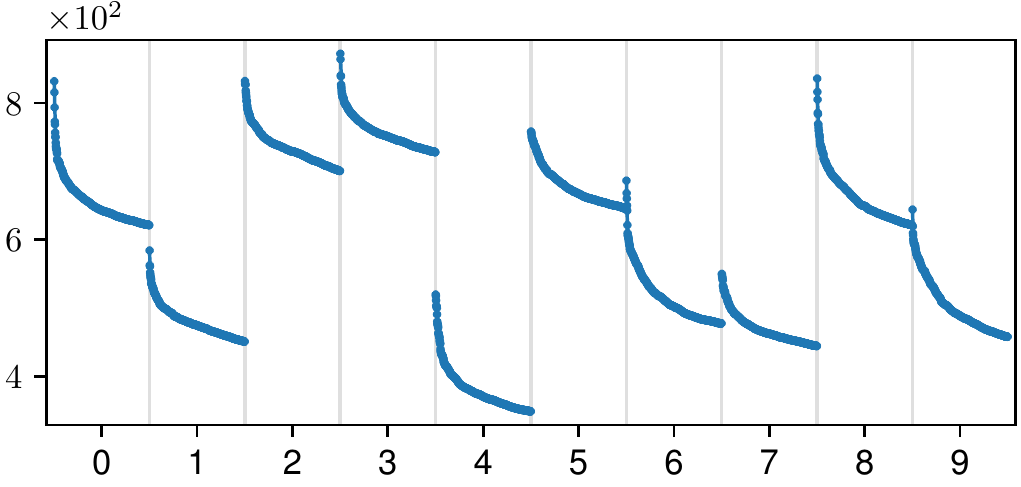}
	}
	\caption{\it Attention weights over training examples for the input test example from class 3 of MNIST (Figure \ref{sfig:app_input_single}) in the \textbf{single task case}. x-axis is partitioned by class, and for each class, top-500 datapoints sorted in descending order are shown.}
 	\label{fig:app_mnist_heatmaps}
		\vspace{-3mm}
\end{figure*}

\begin{figure*}[ht]
	\subfloat[layer-0]{
		\centering
		\includegraphics[width=.32\linewidth]{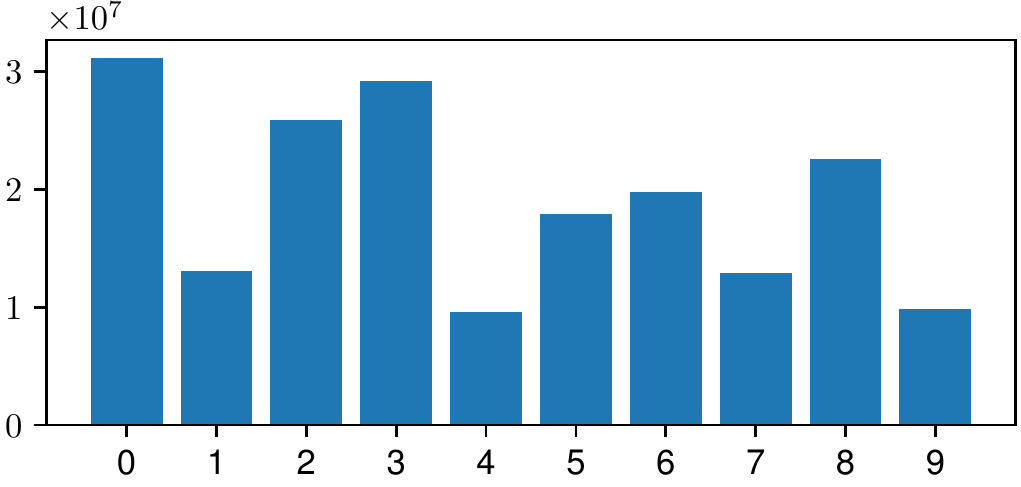}
	}
	\subfloat[layer-1] {
		\centering
		\includegraphics[width=.32\linewidth]{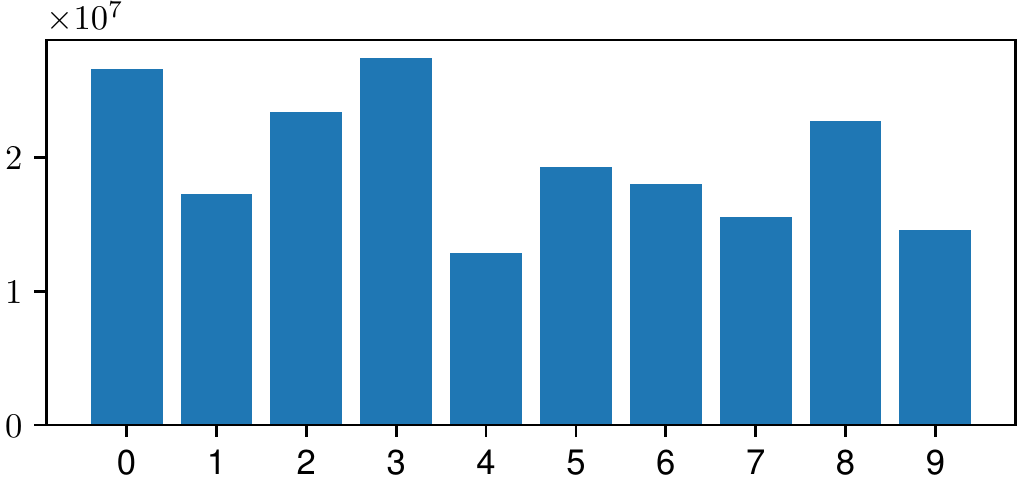}
	}
	\subfloat[layer-2]{
		\centering
		\includegraphics[width=.32\linewidth]{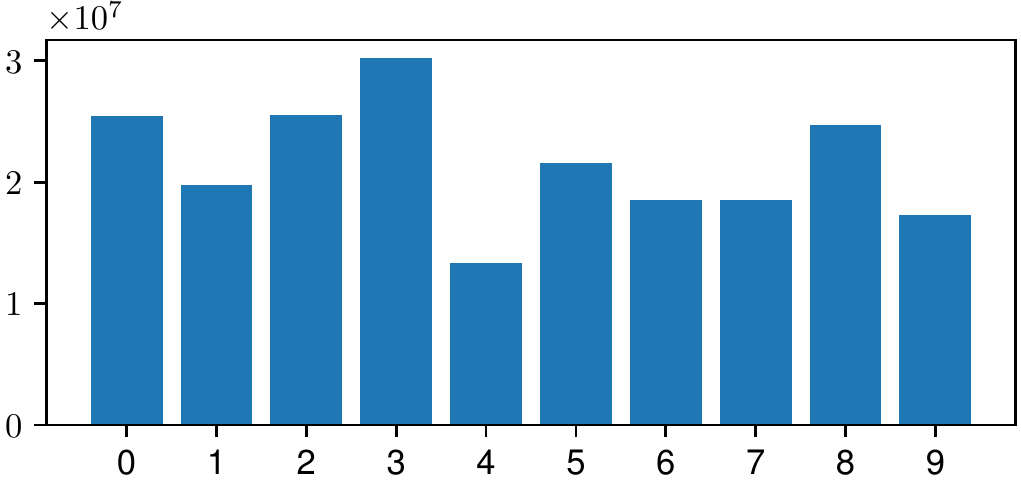}
	}
	\caption{\it Total scores per class,
	for the input test example from class 3 of MNIST (Figure \ref{sfig:app_input_single})
	in the \textbf{single task case}.}
 	\label{fig:app_mnist_per_class_sum}
		\vspace{-3mm}
\end{figure*}

\subsection{Joint Training Case}
\label{app:joint}

Here we show two more examples for the multi-task joint training case discussed in Sec.~\ref{sec:joint}.
The input test examples used here are shown in Figure \ref{sfig:app_input_joint_dress} (class 3 ``dress'') and \ref{sfig:app_input_joint_sandal} (class 5 ``sandal'').
Both of them are correctly classified by the model.
The training examples achieving the highest attention scores
in each layer are shown in Figures \ref{app:fig:joint_dress}
and \ref{app:fig:joint_sandal}, respectively.
The plots of the attention weights and the corresponding sums per class are shown in Figures \ref{fig:app_f_mnist_joint_0} and \ref{fig:app_sum_f_mnist_joint_0} for the class 3 input,
and in Figures \ref{fig:app_f_mnist_joint_1} and \ref{fig:app_sum_f_mnist_joint_1} for the class 5 input.
In the first example (F-MNIST, class 3), we do not see MNIST examples among the top scoring training examples (see Figure \ref{app:fig:joint_dress}), unlike in the second case (F-MNIST, class 5) where we do see the digit ``5'' of MNIST (see Figure \ref{app:fig:joint_sandal}).
The second example also illustrates the case where the argmax of the sum disagrees with the model decision
(illustrating the limitations of analysis based on attention weights only, see Sec.~\ref{sec:limit}).

\begin{figure*}[ht]
\centering
    \subfloat[layer-0 top-1]{
        \centering
        \includegraphics[width=.2\linewidth]{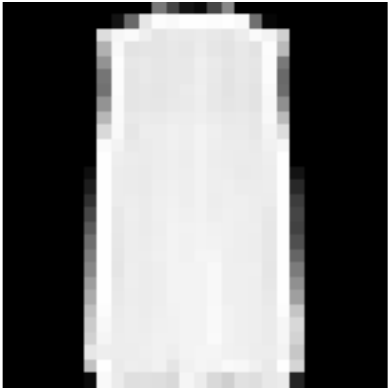}
    }
    \subfloat[layer-0 top-2]{
        \centering
        \includegraphics[width=.2\linewidth]{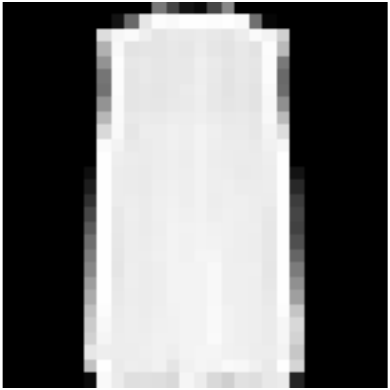}
    }
    \subfloat[layer-0 top-3]{
        \centering
        \includegraphics[width=.2\linewidth]{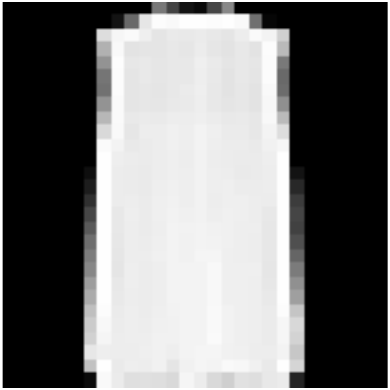}
    }
\\
    \subfloat[layer-1 top-1]{
        \centering
        \includegraphics[width=.2\linewidth]{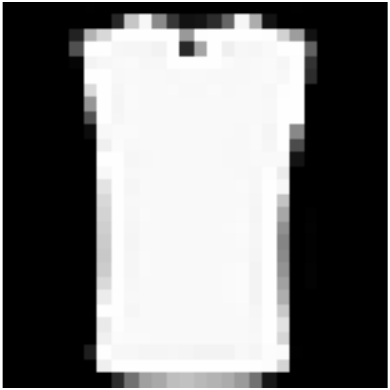}
    }
    \subfloat[layer-1 top-2]{
        \centering
        \includegraphics[width=.2\linewidth]{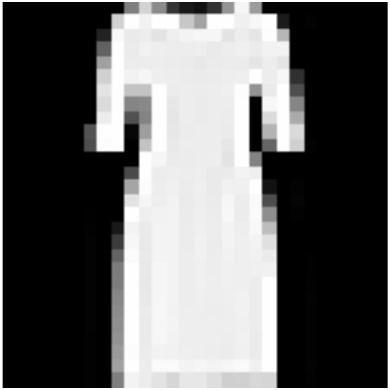}
    }
    \subfloat[layer-1 top-3]{
        \centering
        \includegraphics[width=.2\linewidth]{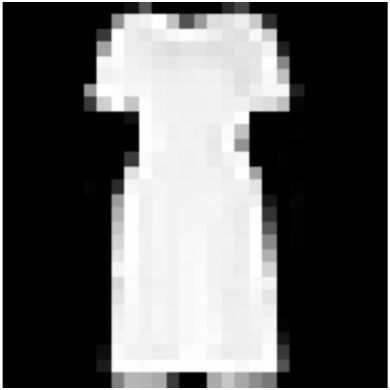}
    }
\\
    \subfloat[layer-2 top-1]{
        \centering
        \includegraphics[width=.2\linewidth]{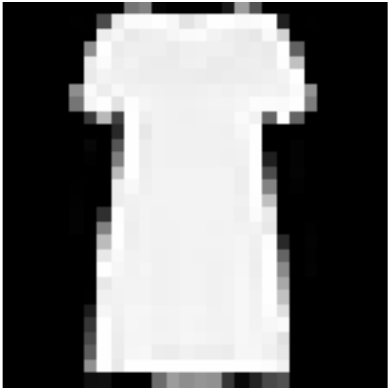}
    }
    \subfloat[layer-2 top-2]{
        \centering
        \includegraphics[width=.2\linewidth]{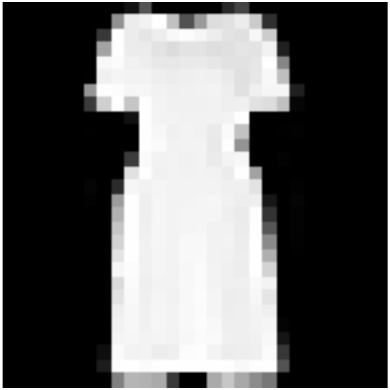}
    }
    \subfloat[layer-2 top-3]{
        \centering
        \includegraphics[width=.2\linewidth]{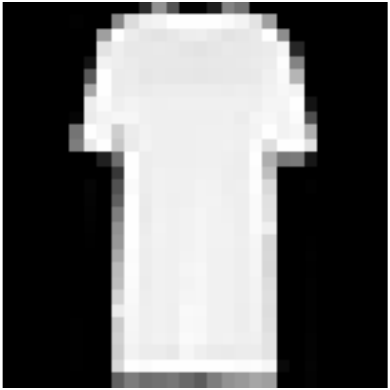}
    }
    \caption{Top scoring examples for the F-MNIST input of Figure \ref{sfig:app_input_joint_dress} in the joint training case.}
    \label{app:fig:joint_dress}
\end{figure*}

\begin{figure*}[ht]
	\subfloat[layer-0]{
		\centering
		\includegraphics[width=.32\linewidth]{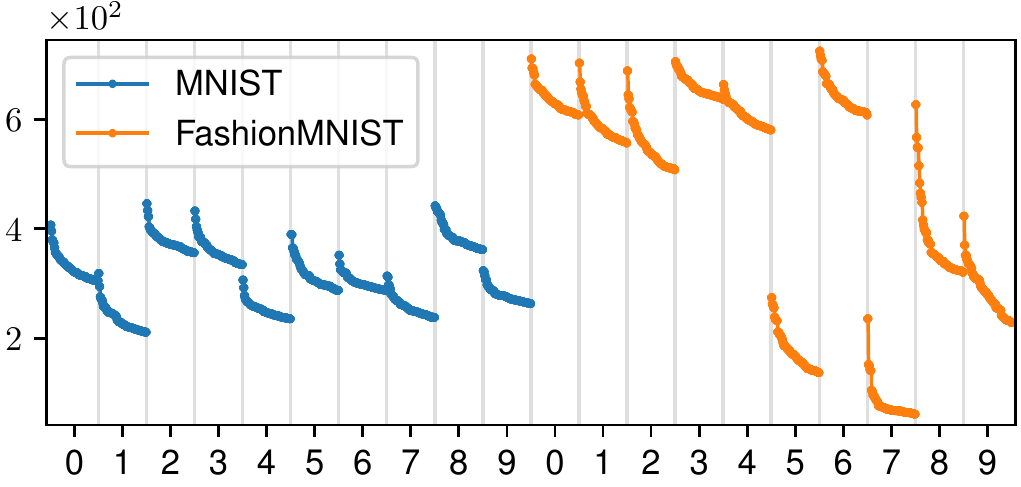}
	}
	\subfloat[layer-1] {
		\centering
		\includegraphics[width=.32\linewidth]{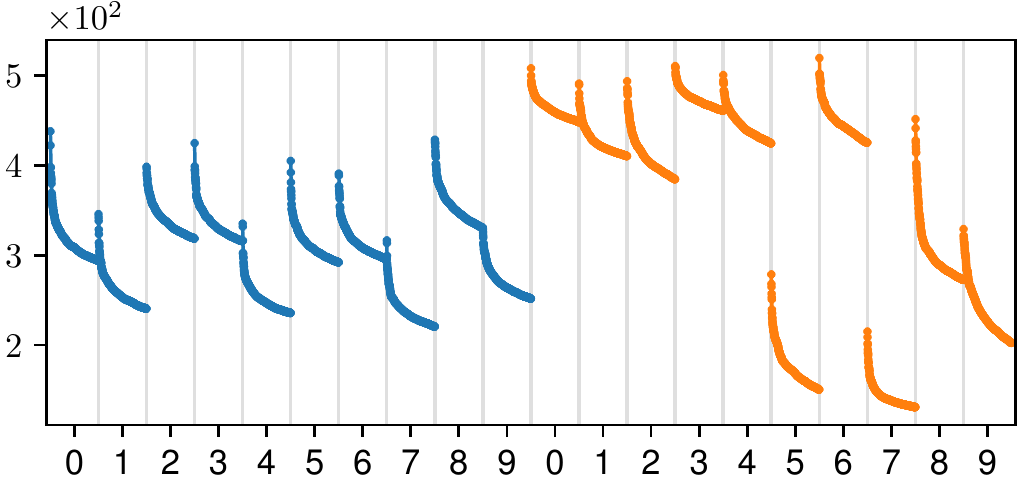}
	}
	\subfloat[layer-2]{
		\centering
		\includegraphics[width=.32\linewidth]{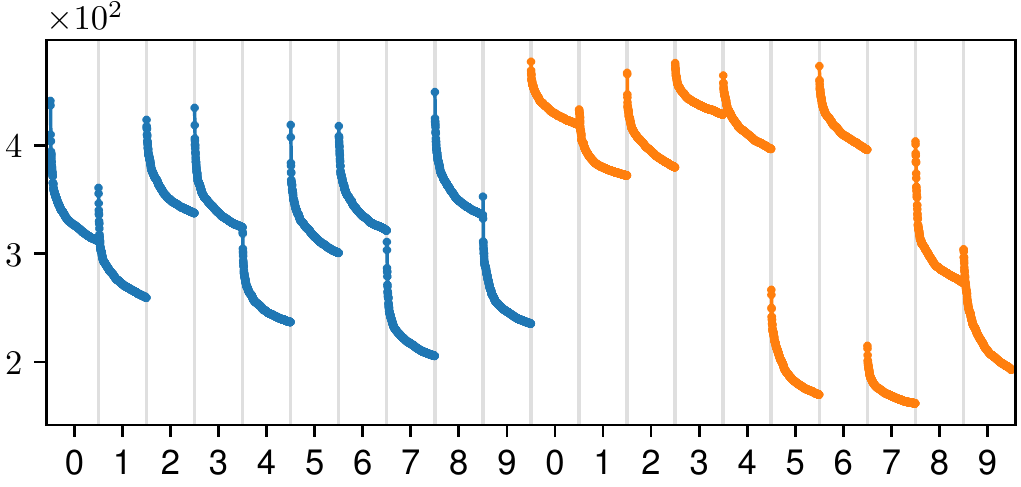}
	}
	\caption{\it Attention weights over training examples for the input test example from class 3 of F-MNIST (Figure \ref{sfig:app_input_joint_dress}) in the \textbf{joint training case}. The x-axis is partitioned by class, and for each class, top-500 datapoints sorted in descending order are shown.}
	\label{fig:app_f_mnist_joint_0}
		\vspace{-3mm}
\end{figure*}

\begin{figure*}[ht]
	\subfloat[layer-0]{
		\centering
		\includegraphics[width=.32\linewidth]{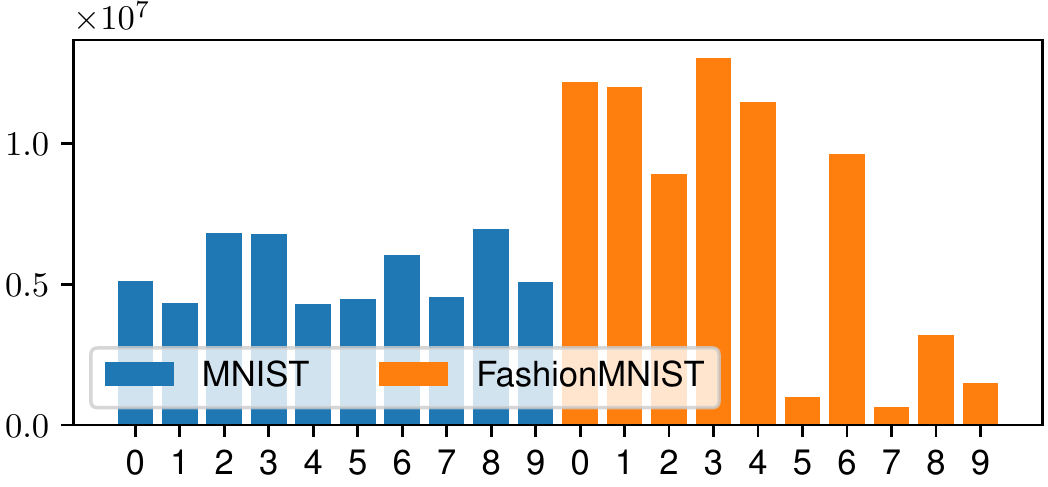}
	}
	\subfloat[layer-1] {
		\centering
		\includegraphics[width=.32\linewidth]{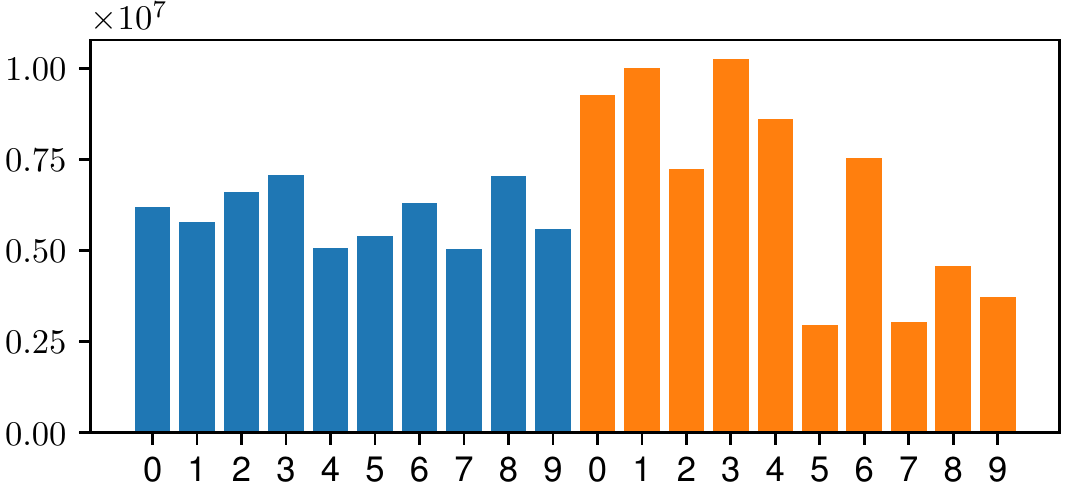}
	}
	\subfloat[layer-2]{
		\centering
		\includegraphics[width=.32\linewidth]{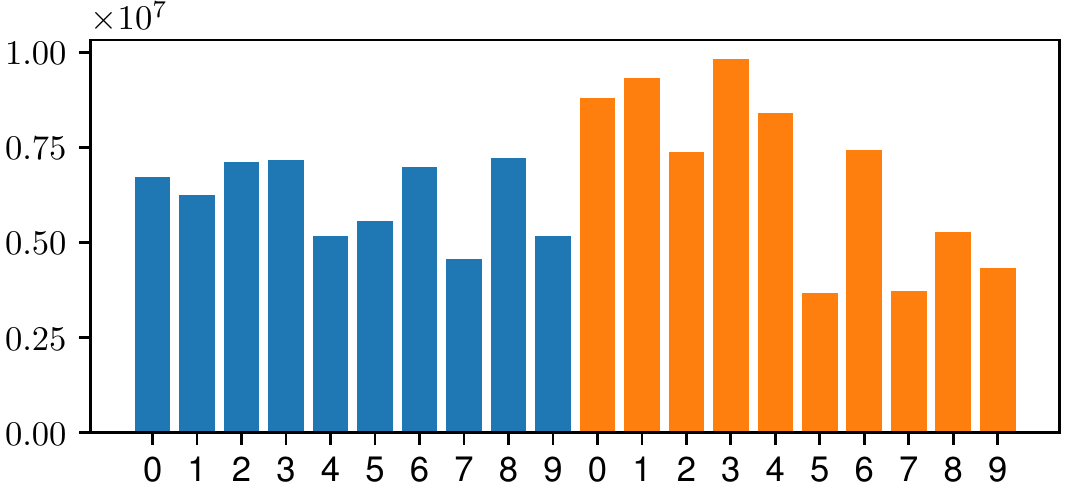}
	}
	\caption{\it Total scores per class
	for the input test example from class 3 of F-MNIST (Figure \ref{sfig:app_input_joint_dress})
	in the \textbf{joint training case}.}
 	\label{fig:app_sum_f_mnist_joint_0}
		\vspace{-3mm}
\end{figure*}

\begin{figure*}[ht]
\centering
    \subfloat[layer-0 top-1]{
        \centering
        \includegraphics[width=.2\linewidth]{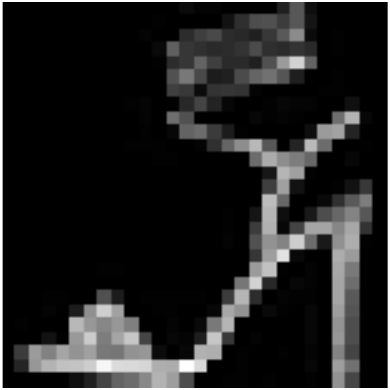}
    }
    \subfloat[layer-0 top-2]{
        \centering
        \includegraphics[width=.2\linewidth]{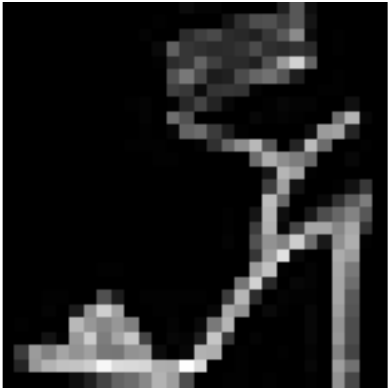}
    }
    \subfloat[layer-0 top-3]{
        \centering
        \includegraphics[width=.2\linewidth]{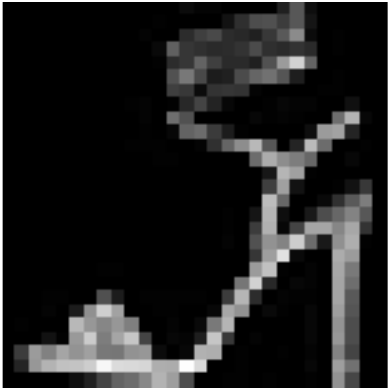}
    }
\\
    \subfloat[layer-1 top-1]{
        \centering
        \includegraphics[width=.2\linewidth]{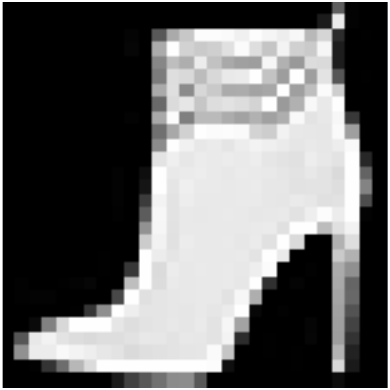}
    }
    \subfloat[layer-1 top-2]{
        \centering
        \includegraphics[width=.2\linewidth]{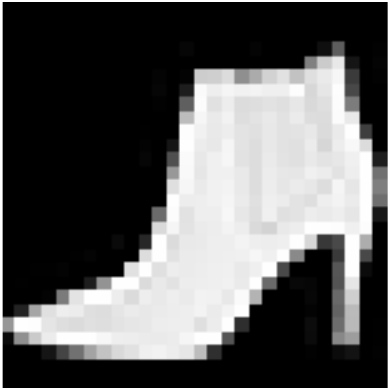}
    }
    \subfloat[layer-1 top-3]{
        \centering
        \includegraphics[width=.2\linewidth]{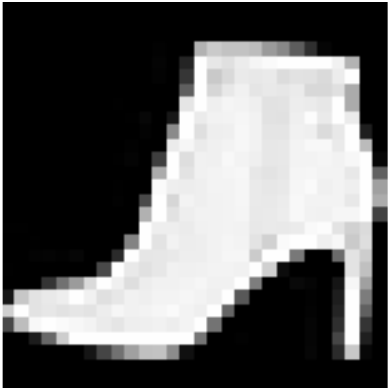}
    }
\\
    \subfloat[layer-2 top-1]{
        \centering
        \includegraphics[width=.2\linewidth]{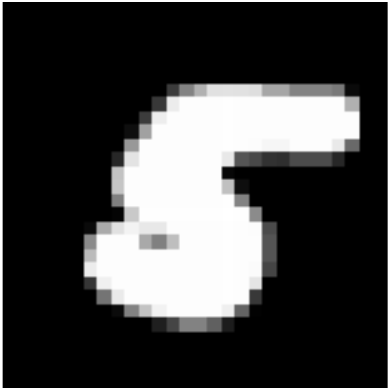}
    }
    \subfloat[layer-2 top-2]{
        \centering
        \includegraphics[width=.2\linewidth]{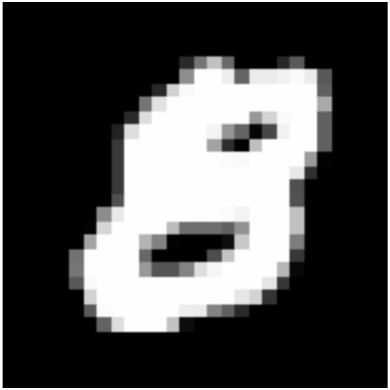}
    }
    \subfloat[layer-2 top-3]{
        \centering
        \includegraphics[width=.2\linewidth]{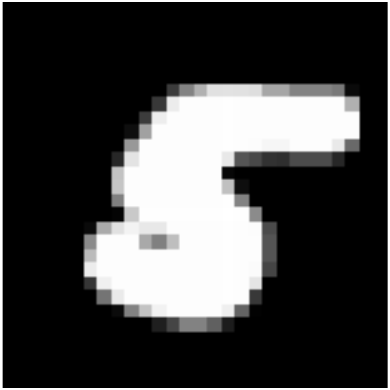}
    }
    \caption{Top scoring examples for the F-MNIST input of Figure \ref{sfig:app_input_joint_sandal} in the joint training case.}
    \label{app:fig:joint_sandal}
\end{figure*}

\begin{figure*}[ht]
	\subfloat[layer-0]{
		\centering
		\includegraphics[width=.32\linewidth]{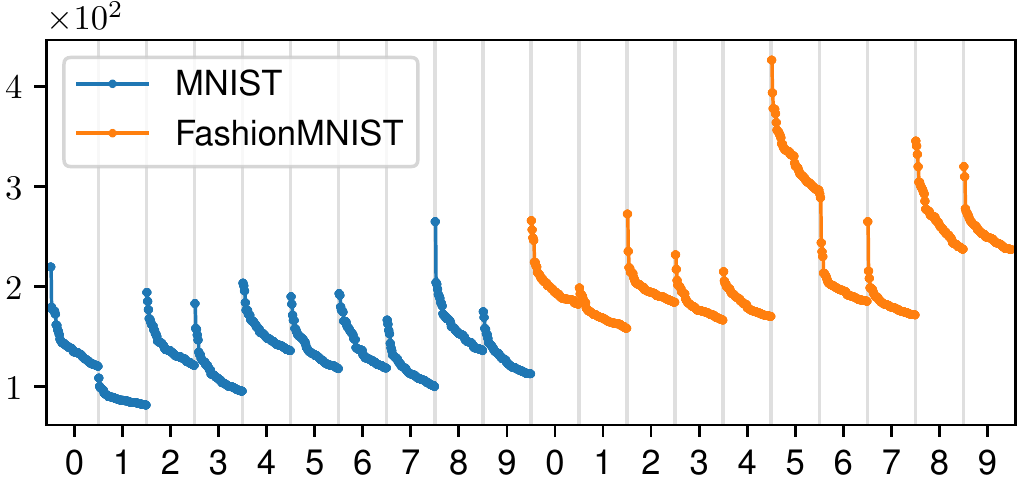}
	}
	\subfloat[layer-1] {
		\centering
		\includegraphics[width=.32\linewidth]{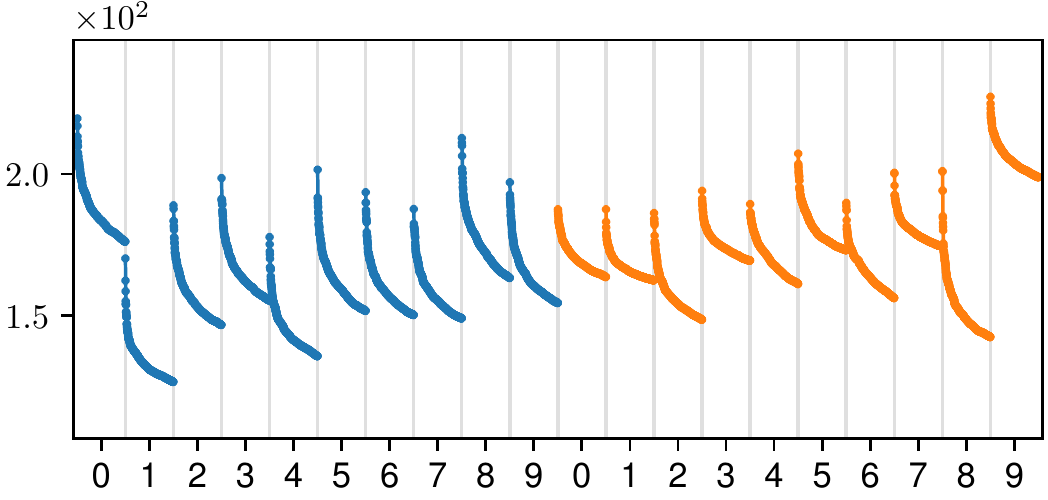}
	}
	\subfloat[layer-2]{
		\centering
		\includegraphics[width=.32\linewidth]{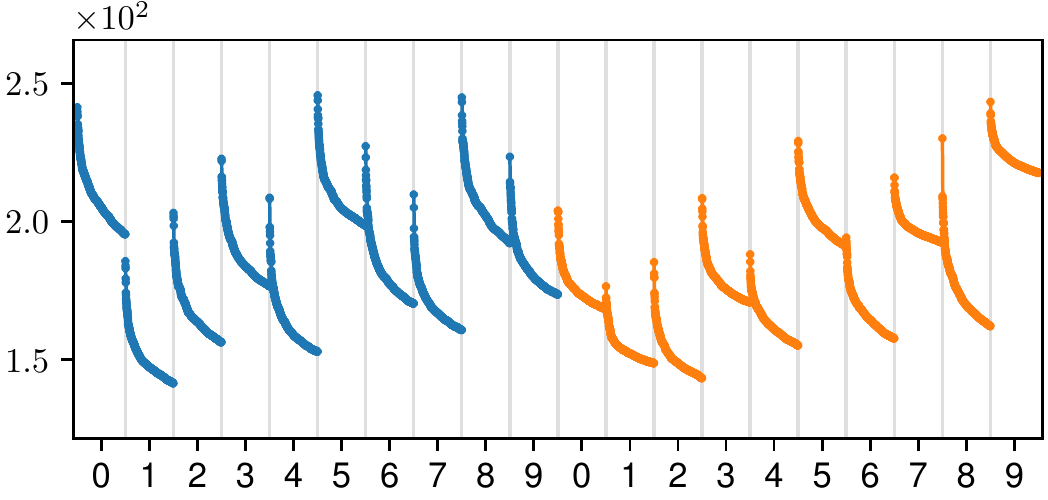}
	}
	\caption{\it Attention weights over training examples for the input test example from class 5 of F-MNIST (Figure \ref{sfig:app_input_joint_sandal}) in the joint training case. The x-axis is partitioned by class, and for each class, top-500 datapoints sorted in descending order are shown.}
 	\label{fig:app_f_mnist_joint_1}
		\vspace{-3mm}
\end{figure*}

\begin{figure*}[ht]
	\subfloat[layer-0]{
		\centering
		\includegraphics[width=.32\linewidth]{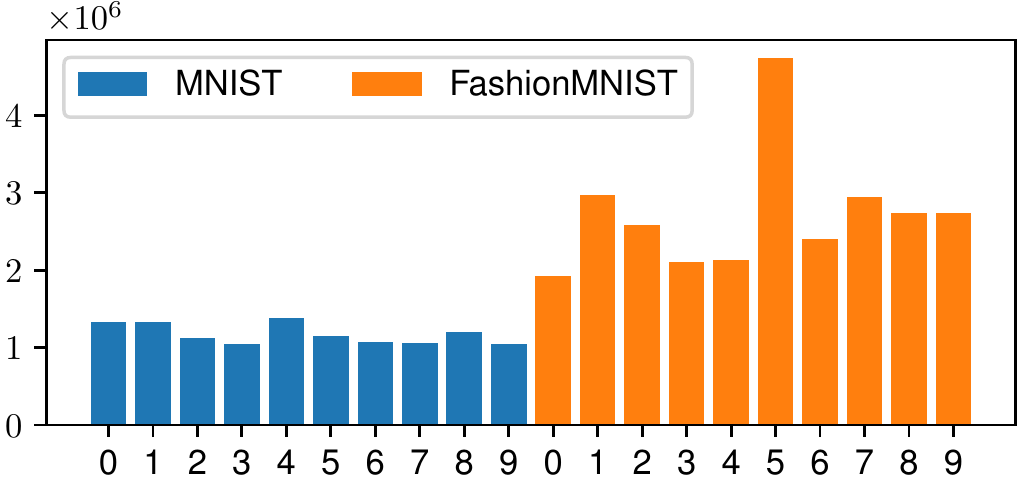}
	}
	\subfloat[layer-1] {
		\centering
		\includegraphics[width=.32\linewidth]{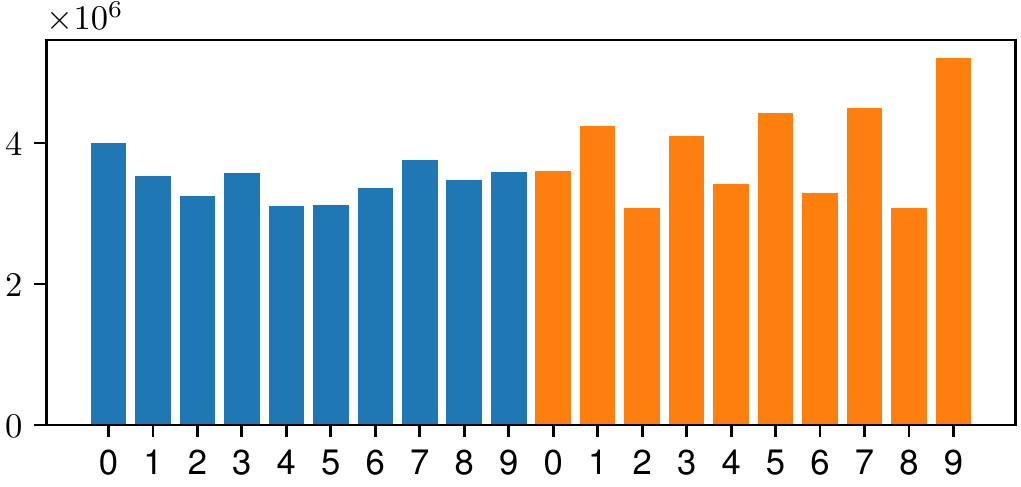}
	}
	\subfloat[layer-2]{
		\centering
		\includegraphics[width=.32\linewidth]{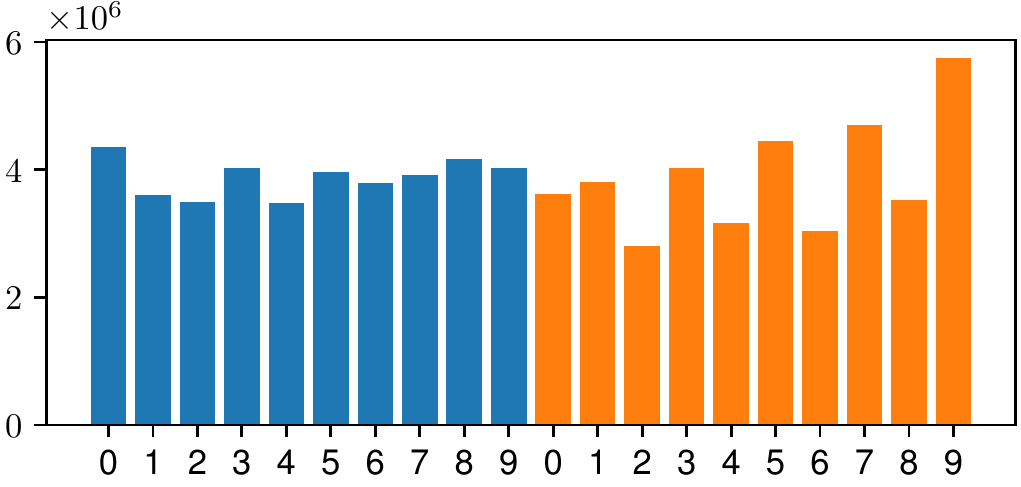}
	}
	\caption{\it Total scores per class
	for the input test example from class 5 of F-MNIST (Figure \ref{sfig:app_input_joint_sandal})
	in the joint training case.}
 	\label{fig:app_sum_f_mnist_joint_1}
		\vspace{-3mm}
\end{figure*}

\subsection{Continual Learning Case}
\label{app:continual}

In the continual learning case, we show three examples:
one MNIST example (Figure \ref{sfig:app_input_seq_misclassified}) which is misclassified,
and two others which are correctly classified (one from MNIST is shown in Figure \ref{sfig:app_input_seq_mnist}, one from F-MNIST  in Figure \ref{sfig:app_input_seq_fmnist}).

The misclassified example (Figures \ref{fig:mis_mnist_heatmaps} and \ref{fig:mis_mnist_sum}) has been already discussed in Sec.~\ref{sec:continual}.

For the correctly classified inputs,
Figures \ref{fig:app_mnist_seq} and \ref{fig:app_mnist_seq_sum} respectively show the attention weights and the sums for the MNIST input (Figure \ref{sfig:app_input_seq_mnist}) of class 7,
and
Figures \ref{fig:app_fmnist_seq} and \ref{fig:app_fmnist_seq_sum} show them for the F-MNIST input (Figure \ref{sfig:app_input_seq_fmnist}) of class 3 (``dress'').

The training examples achieving the highest attention scores
in each case are shown in Figures \ref{app:fig:misclassified_mnist},
\ref{app:fig:seq_f_mnist}, and \ref{app:fig:seq_mnist}.

\begin{figure*}[ht]
\centering
    \subfloat[layer-0 top-1]{
        \centering
        \includegraphics[width=.2\linewidth]{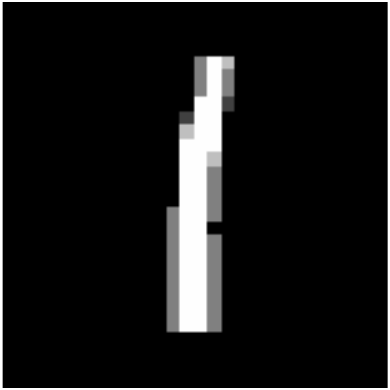}
    }
    \subfloat[layer-0 top-2]{
        \centering
        \includegraphics[width=.2\linewidth]{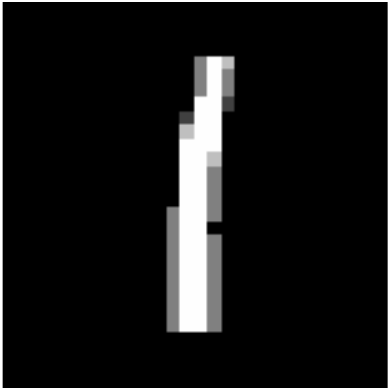}
    }
    \subfloat[layer-0 top-3]{
        \centering
        \includegraphics[width=.2\linewidth]{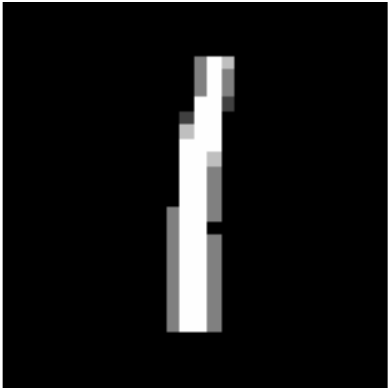}
    }
\\
    \subfloat[layer-1 top-1]{
        \centering
        \includegraphics[width=.2\linewidth]{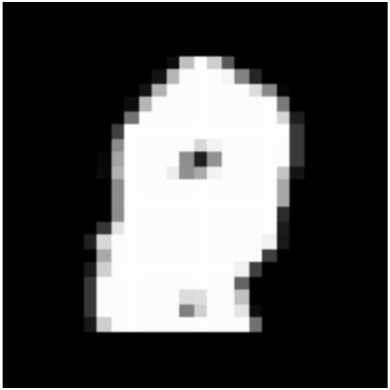}
    }
    \subfloat[layer-1 top-2]{
        \centering
        \includegraphics[width=.2\linewidth]{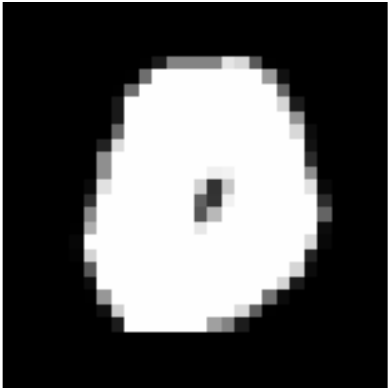}
    }
    \subfloat[layer-1 top-3]{
        \centering
        \includegraphics[width=.2\linewidth]{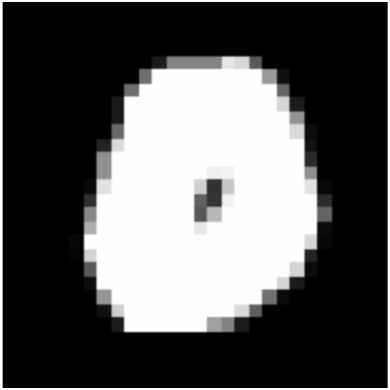}
    }
\\
    \subfloat[layer-2 top-1]{
        \centering
        \includegraphics[width=.2\linewidth]{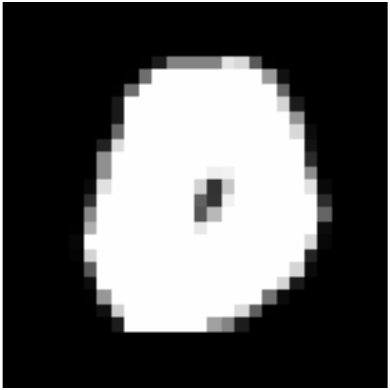}
    }
    \subfloat[layer-2 top-2]{
        \centering
        \includegraphics[width=.2\linewidth]{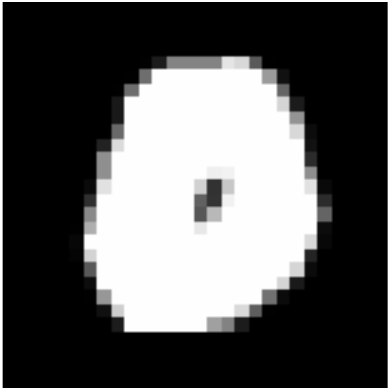}
    }
    \subfloat[layer-2 top-3]{
        \centering
        \includegraphics[width=.2\linewidth]{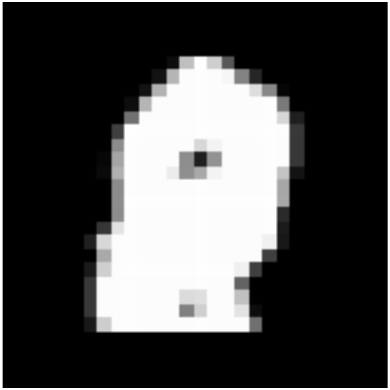}
    }
    \caption{Top scoring examples for the misclassified MNIST input of Figure \ref{sfig:app_input_seq_misclassified} in the \textbf{continual training case}.}
    \label{app:fig:misclassified_mnist}
\end{figure*}

\begin{figure*}[ht]
\centering
    \subfloat[layer-0 top-1]{
        \centering
        \includegraphics[width=.2\linewidth]{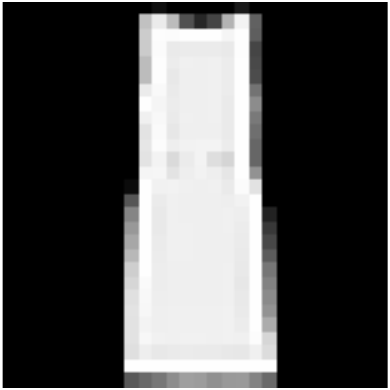}
    }
    \subfloat[layer-0 top-2]{
        \centering
        \includegraphics[width=.2\linewidth]{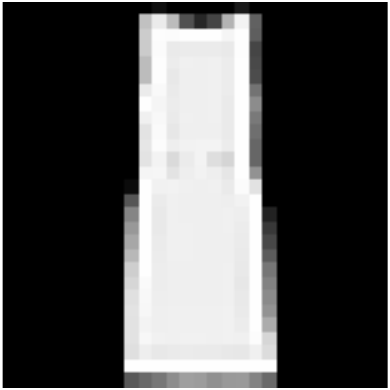}
    }
    \subfloat[layer-0 top-3]{
        \centering
        \includegraphics[width=.2\linewidth]{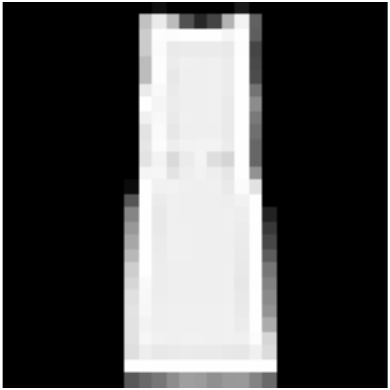}
    }
\\
    \subfloat[layer-1 top-1]{
        \centering
        \includegraphics[width=.2\linewidth]{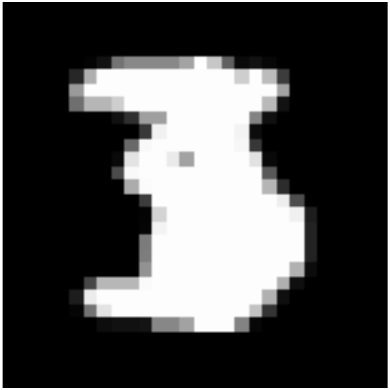}
    }
    \subfloat[layer-1 top-2]{
        \centering
        \includegraphics[width=.2\linewidth]{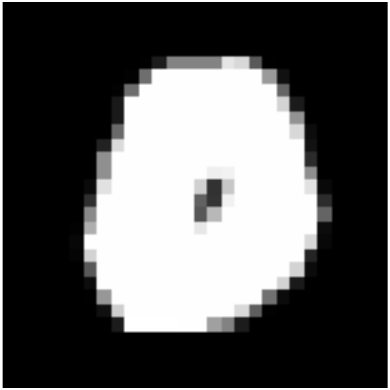}
    }
    \subfloat[layer-1 top-3]{
        \centering
        \includegraphics[width=.2\linewidth]{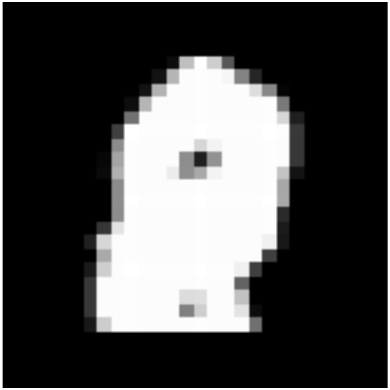}
    }
\\
    \subfloat[layer-2 top-1]{
        \centering
        \includegraphics[width=.2\linewidth]{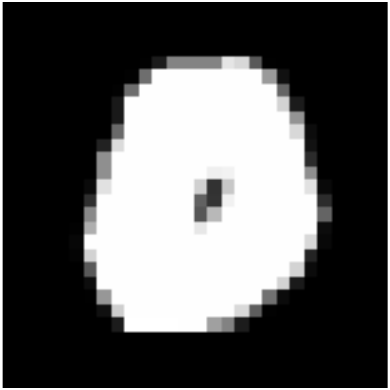}
    }
    \subfloat[layer-2 top-2]{
        \centering
        \includegraphics[width=.2\linewidth]{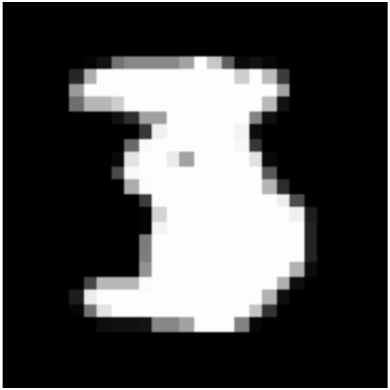}
    }
    \subfloat[layer-2 top-3]{
        \centering
        \includegraphics[width=.2\linewidth]{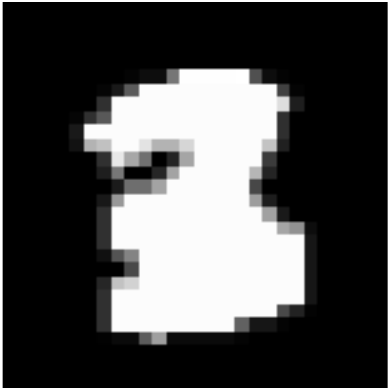}
    }
    \caption{Top scoring examples for the correctly classified F-MNIST input of Figure \ref{sfig:app_input_seq_fmnist} in the \textbf{continual training case}.}
\label{app:fig:seq_f_mnist}
\end{figure*}

\begin{figure*}[ht]
	\subfloat[layer-0]{
		\centering
		\includegraphics[width=.32\linewidth]{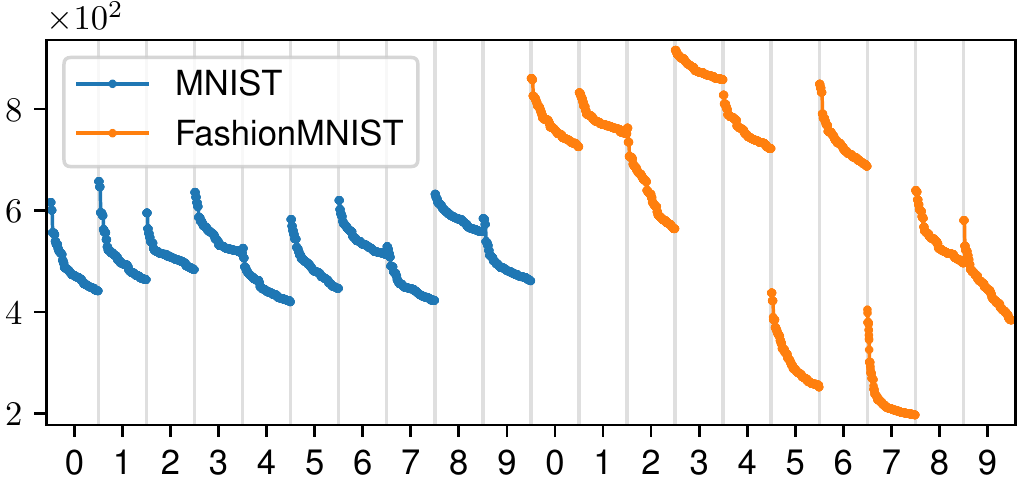}
	}
	\subfloat[layer-1] {
		\centering
		\includegraphics[width=.32\linewidth]{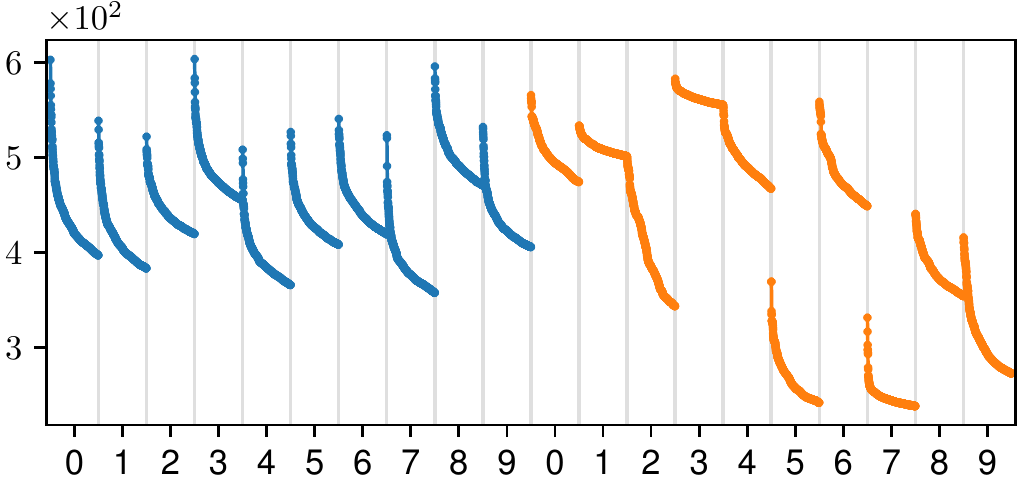}
	}
	\subfloat[layer-2]{
		\centering
		\includegraphics[width=.32\linewidth]{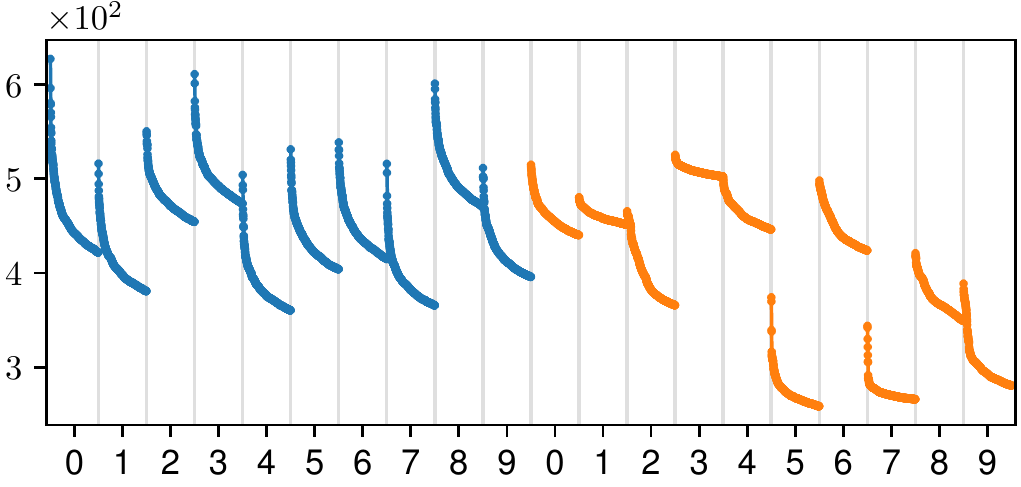}
	}
	\caption{\it Attention weights over training examples for the input test example from class 3 of F-MNIST (Figure \ref{sfig:app_input_seq_fmnist}) in the \textbf{continual training case}. The x-axis is partitioned by class, and for each class, top-500 datapoints sorted in descending order are shown.}
 	\label{fig:app_fmnist_seq}
		\vspace{-3mm}
\end{figure*}

\begin{figure*}[ht]
	\subfloat[layer-0]{
		\centering
		\includegraphics[width=.32\linewidth]{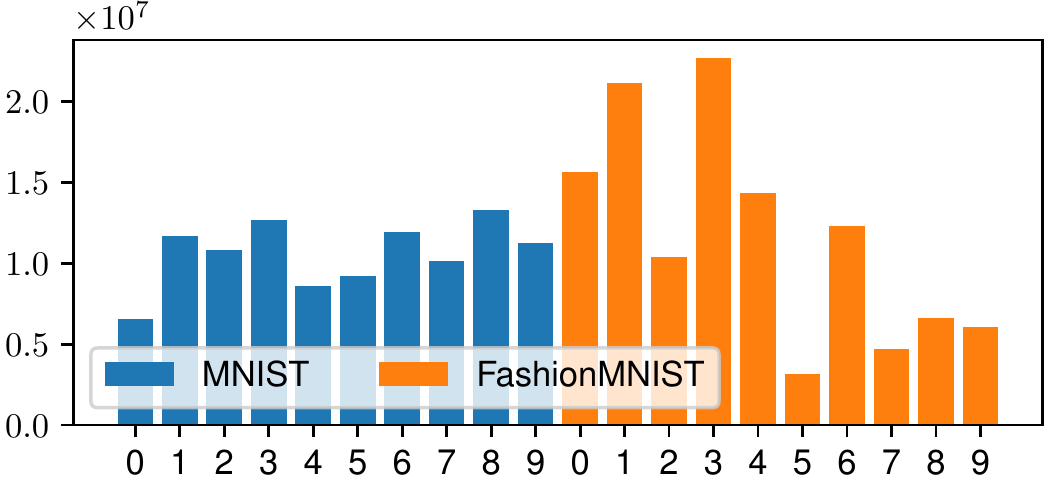}
	}
	\subfloat[layer-1] {
		\centering
		\includegraphics[width=.32\linewidth]{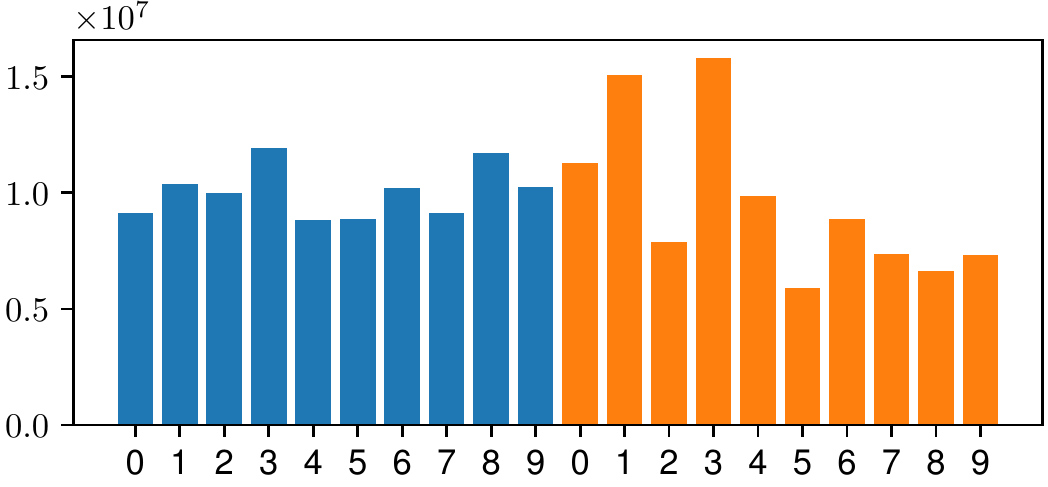}
	}
	\subfloat[layer-2]{
		\centering
		\includegraphics[width=.32\linewidth]{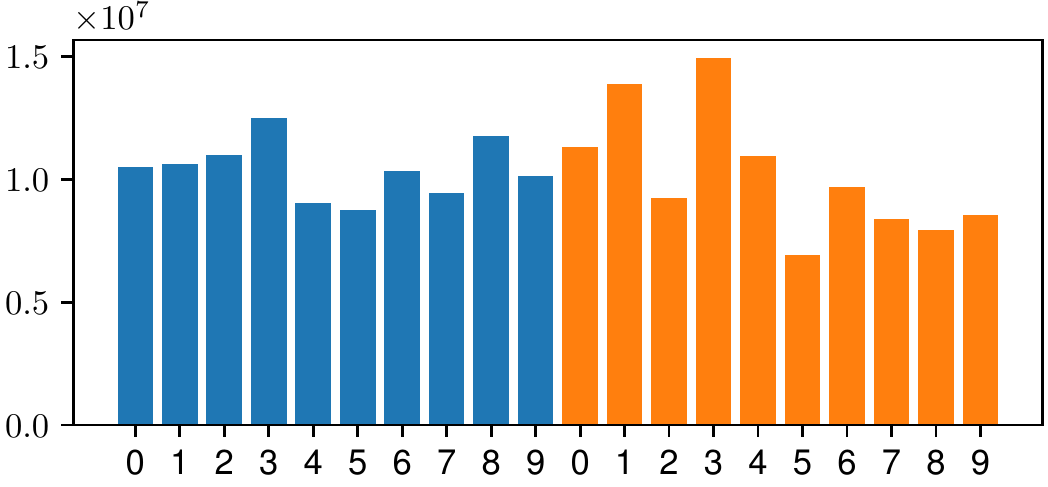}
	}
	\caption{\it Total scores per class
	for the input test example from class 3 of F-MNIST (Figure \ref{sfig:app_input_seq_fmnist})
	in the joint training case.}
 	\label{fig:app_fmnist_seq_sum}
		\vspace{-3mm}
\end{figure*}

\begin{figure*}[ht]
\centering
    \subfloat[layer-0 top-1]{
        \centering
        \includegraphics[width=.2\linewidth]{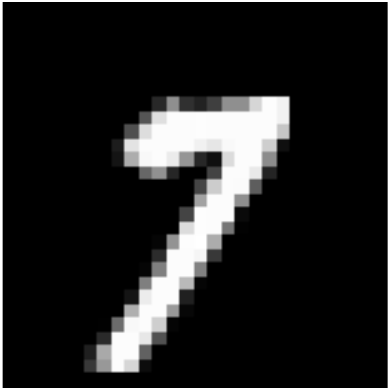}
    }
    \subfloat[layer-0 top-2]{
        \centering
        \includegraphics[width=.2\linewidth]{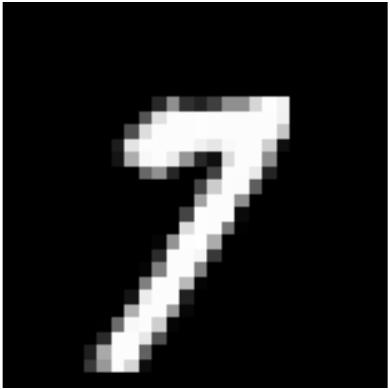}
    }
    \subfloat[layer-0 top-3]{
        \centering
        \includegraphics[width=.2\linewidth]{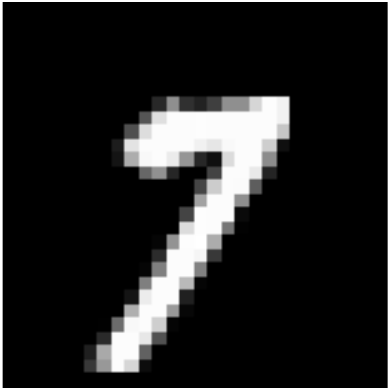}
    }
\\
    \subfloat[layer-1 top-1]{
        \centering
        \includegraphics[width=.2\linewidth]{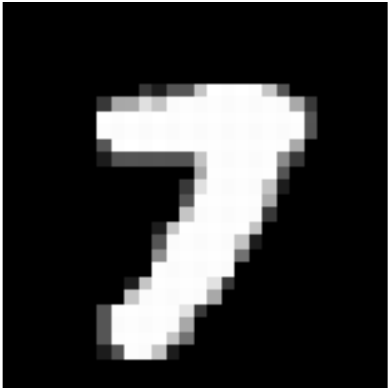}
    }
    \subfloat[layer-1 top-2]{
        \centering
        \includegraphics[width=.2\linewidth]{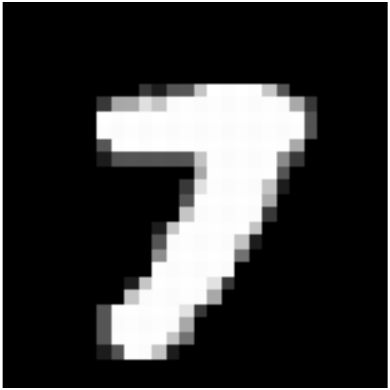}
    }
    \subfloat[layer-1 top-3]{
        \centering
        \includegraphics[width=.2\linewidth]{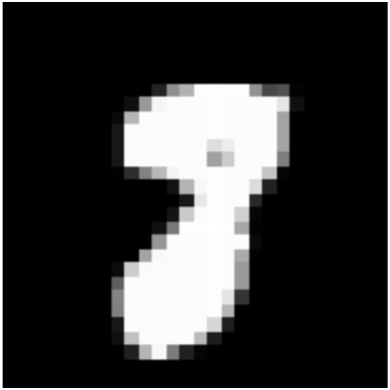}
    }
\\
    \subfloat[layer-2 top-1]{
        \centering
        \includegraphics[width=.2\linewidth]{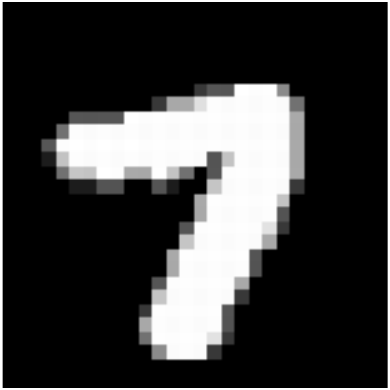}
    }
    \subfloat[layer-2 top-2]{
        \centering
        \includegraphics[width=.2\linewidth]{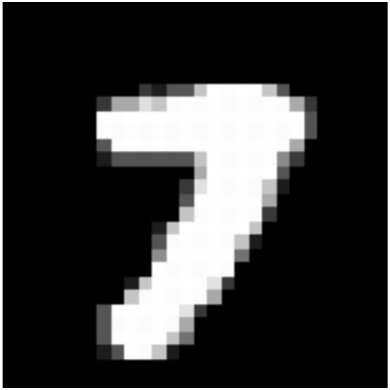}
    }
    \subfloat[layer-2 top-3]{
        \centering
        \includegraphics[width=.2\linewidth]{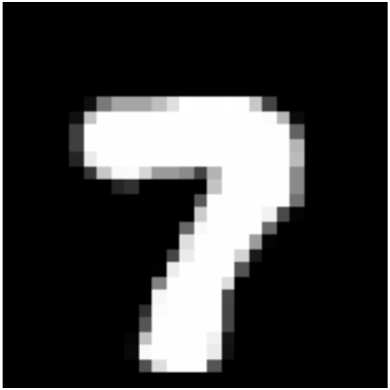}
    }
    \caption{Top scoring examples for the correctly classified MNIST input of Figure \ref{sfig:app_input_seq_mnist} in the \textbf{continual training case}.}
    \label{app:fig:seq_mnist}
\end{figure*}

\begin{figure*}[ht]
	\subfloat[layer-0]{
		\centering
		\includegraphics[width=.32\linewidth]{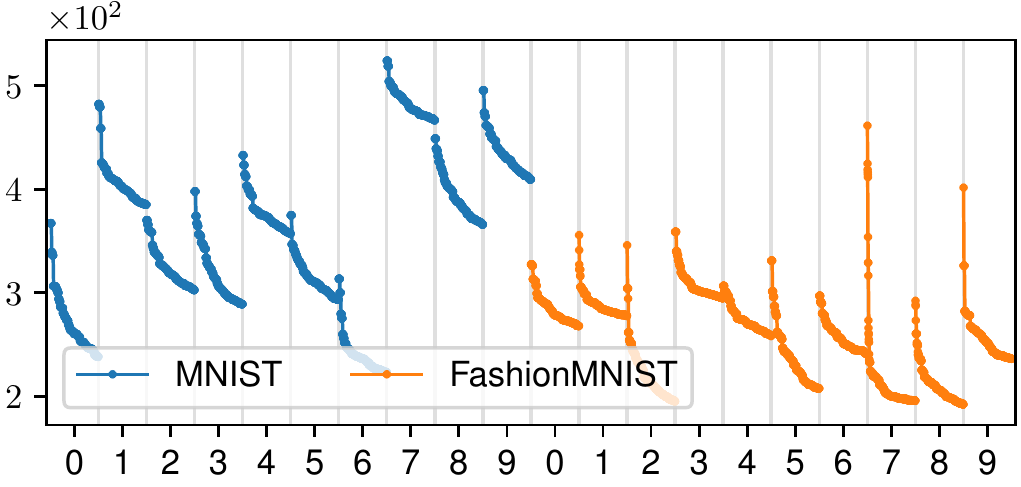}
	}
	\subfloat[layer-1] {
		\centering
		\includegraphics[width=.32\linewidth]{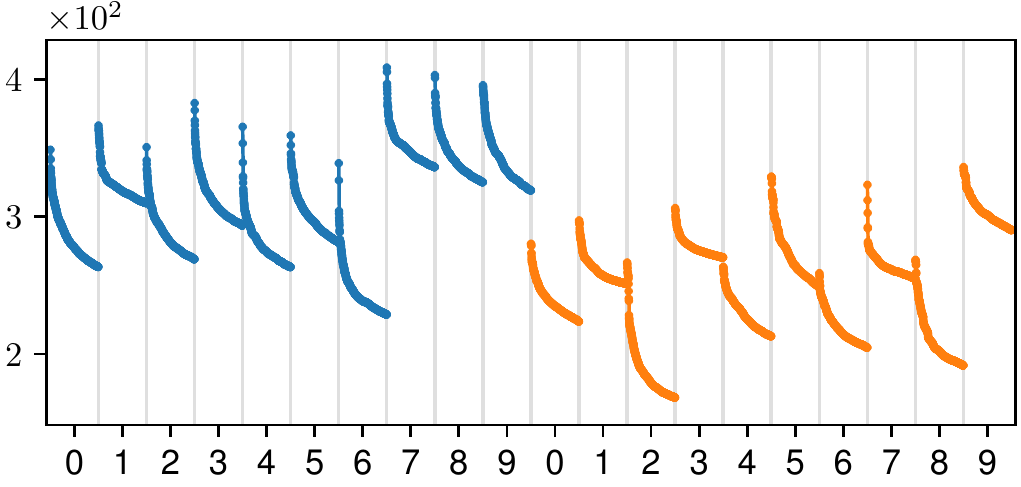}
	}
	\subfloat[layer-2]{
		\centering
		\includegraphics[width=.32\linewidth]{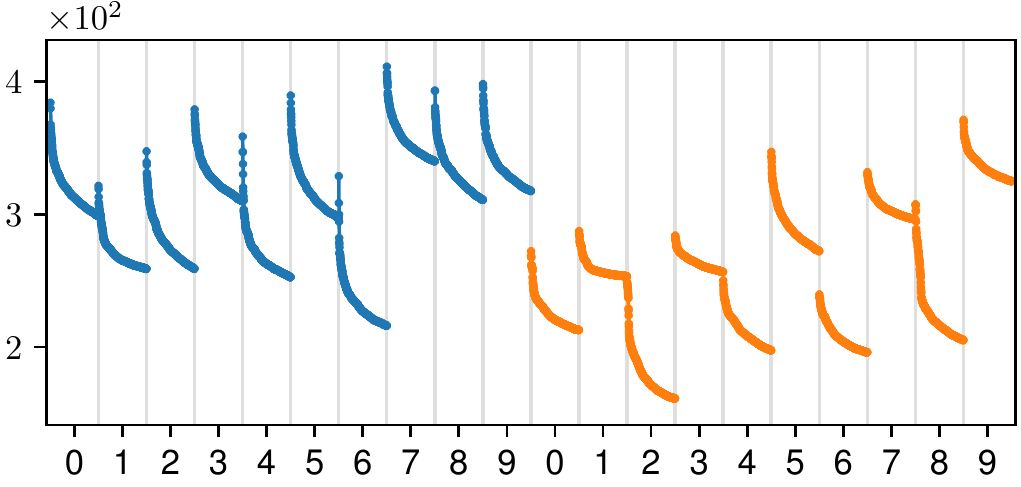}
	}
	\caption{\it Attention weights over training examples for the input test example from class 7 of MNIST (Figure \ref{sfig:app_input_seq_mnist}) in the \textbf{continual training} case. The x-axis is partitioned by class, and for each class, top-500 datapoints sorted in descending order are shown.}
 	\label{fig:app_mnist_seq}
		\vspace{-3mm}
\end{figure*}

\begin{figure*}[ht]
	\subfloat[layer-0]{
		\centering
		\includegraphics[width=.32\linewidth]{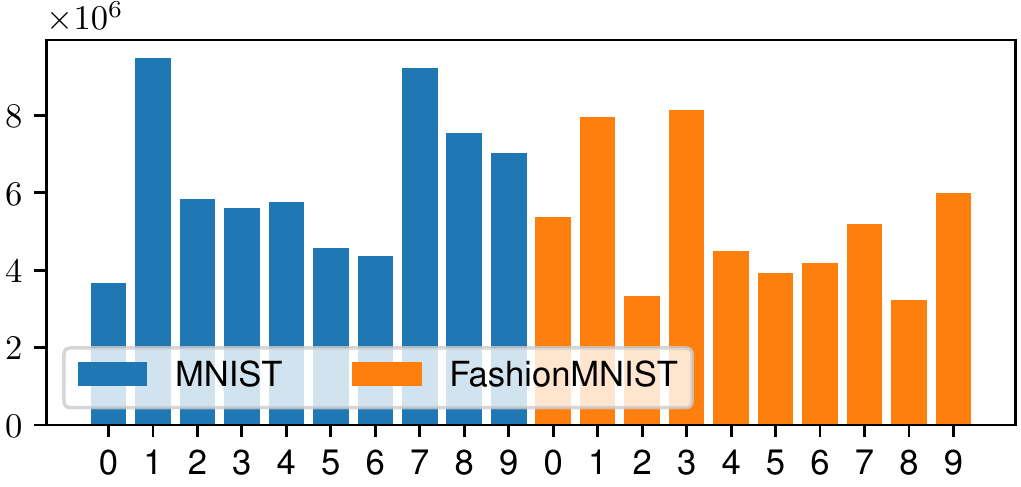}
	}
	\subfloat[layer-1] {
		\centering
		\includegraphics[width=.32\linewidth]{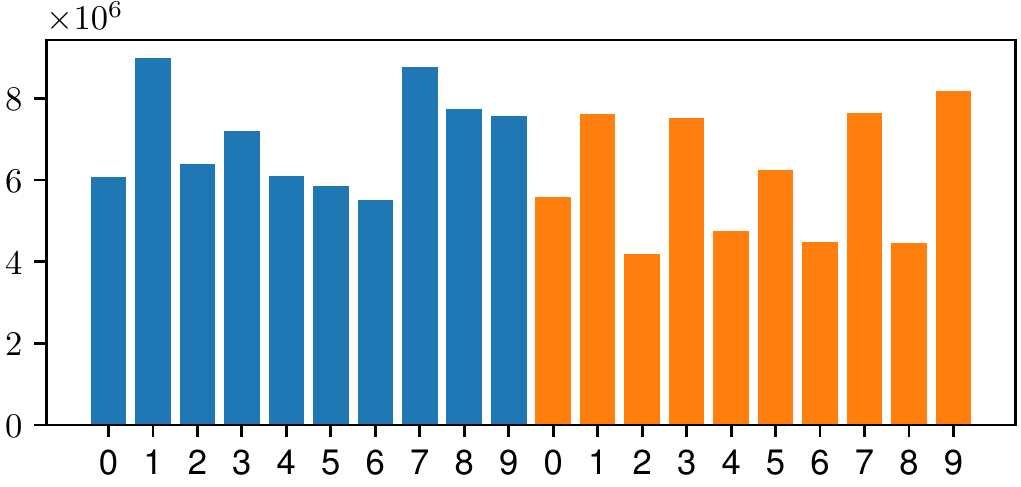}
	}
	\subfloat[layer-2]{
		\centering
		\includegraphics[width=.32\linewidth]{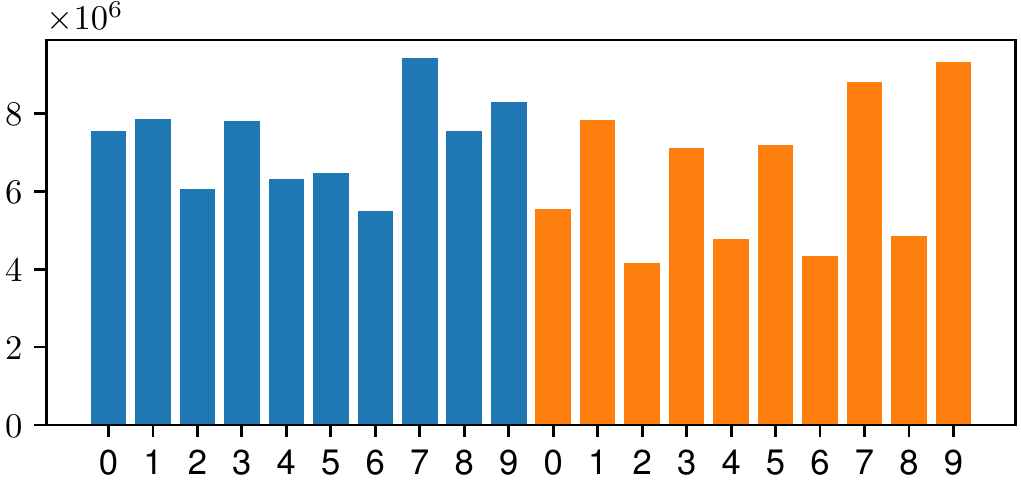}
	}
	\caption{\it Total scores per class
	for the input test example from class 7 of MNIST (Figure \ref{sfig:app_input_seq_mnist})
	in the \textbf{continual training} case.}
 	\label{fig:app_mnist_seq_sum}
		\vspace{-3mm}
\end{figure*}

\section{Language Modelling Experiments}
\label{app:lm}

\subsection{Basic Settings}

Here we describe experimental details of our language modelling experiments
introduced in Sec.\ref{sec:lm}
and provide more examples.

We train a one-layer LSTM language model (LM) on two small datasets: a tiny public domain book,      ``Aesop's Fables'', by J. H. Stickney (publicly available under the project Gutenberg\footnote{\url{https://www.gutenberg.org/files/49010/49010-0.txt}}) and the standard WikiText-2 dataset \citep{merity2016pointer}.

We train a word-level LM on WikiText-2 (about 2\,M running words) and a character-level LM on the book (about 185\,K running characters).
The word-level model has a vocabulary size of 33\,K,
and the input word embedding and the LSTM dimension of 200.
For the character-level LM, the vocabulary size is 107, with an input embedding size of 64 and an LSTM layer of size 1024.
For Aesop's Fables, we isolated the last parts of the book (containing alternative versions of the tales) as a test set to generate test queries.
For WikiText-2, we use the regular train/valid/test splits.

As has been mentioned in Sec.\ref{sec:lm}, 
we focus on analysing the linear layer in the LSTM RNN layer (i.e. the single linear transformation which groups all projections including all transformations for the gates).
The input to this linear layer consists of one input coming from the previous layer and the  LSTM state from the previous time step.

Please find the corresponding analysis for the character-level LM on Aesop's Fables in
Sec.~\ref{app:lm:aesop}, and 
for the word-level LM on WikiText-2 in
Sec.~\ref{app:lm:wiki}.
Generally, we found all examples interesting
and intuitive.

\subsection{Character-Level Experiments on ``Aesop's Fables''}
\label{app:lm:aesop}

Tables \ref{tab:aesop1}, \ref{tab:aesop2} and \ref{tab:aesop3} show the queries we used and the corresponding top scoring training text passages
for Aesop's Fables.

\begin{table}[h]
\caption{Example test query and top-3 training passages from Aesop's Fables.
The test query token and the top scoring training token are highlighted in \bluet{bold}.
The query text is taken from the test set.
We see training examples with a concept of ``doing something fast'' getting high attention scores.}
\label{tab:aesop1}
\begin{center}
\begin{tabular}{l|l}
\toprule
Query & ... Wolf was glad
to take himself off as f\bluet{a}st as his legs would carry him. ... \\ \midrule
Top-1 & When the Hare awoke, the Tortoise was not in sight; and running as f\bluet{a}st \\
 & as he could, he found her comfortably dozing at their goal. \\ \midrule
Top-2 & But the Stork with his long legs easily followed them to the water, \\
 & and
kept on eating them as f\bluet{a}st as he could. \\ \midrule
Top-3 & The poor Mule made room for him as f\bluet{a}st as he could, and the Horse went
proudly on his way. \\ 
\bottomrule
\end{tabular}
\end{center}
\end{table}

\begin{table}[h]
\caption{Example test query and top-3 training passages from Aesop's Fables.
The test query token and the top scoring training token are highlighted in \bluet{bold}.
The query text is taken from the test set.
We see training examples with a phrase of form ``a + adjective/single + word starting with d'' getting high attention scores.}
\label{tab:aesop2}
\begin{center}
\begin{tabular}{l|l}
\toprule
Query & The
Wolf stood high up the stream and the Lamb a little \bluet{d}istance below.\\ \midrule
Top-1 & when the Pigeons had let him come in, they found that \\
& he slew more
of them in a single \bluet{d}ay than the Kite could \\ \midrule
Top-2 & all the advantages that you mention, yet when I hear the \\
& bark of but a
single \bluet{d}og, I faint with terror \\ \midrule
Top-3 & GREAT Cloud passed rapidly over a country which was \\
& parched by heat,
but did not let fall a single \bluet{d}rop to refresh it. \\ 
\bottomrule
\end{tabular}
\end{center}
\end{table}

\begin{table}[h]
\caption{Example test query and top-3 training passages from Aesop's Fables.
The test query token and the top scoring training token are highlighted in \bluet{bold}. Here the query is from the training text.
We see training examples with a phrase of form ``at'' plus some timing or counting related concept getting high attention scores.
}
\label{tab:aesop3}
\begin{center}
\begin{tabular}{l|l}
\toprule
Query & The Squirrel takes a look at them—he can do no more. At \bluet{SPACE} one time he is
called away; \\
& at another, even dragged off in the Lion’s service.\\ \midrule
Top-1 & ... at seeing an elephant. Is it his great bulk that you so much admire?
Mere size is nothing. \\
& At \bluet{SPACE} most it can only frighten little girls and boys \\ \midrule
Top-2 & At times he would snap at his prey, and at \bluet{SPACE} times play \\
& with him and
lick him with his tongue, ...\\
 \midrule
Top-3 & As the Log did not move, they swam round it, keeping a safe distance
away, \\
& and at \bluet{SPACE} last one by one hopped upon it.\\ 
\bottomrule
\end{tabular}
\end{center}
\end{table}

\subsection{Word-Level Experiments on WikiText-2}
\label{app:lm:wiki}
Tables \ref{tab:wiki2}, \ref{tab:wiki1} and \ref{tab:wiki3} show the examples for the word-level language model trained on WikiText-2.

\begin{table}[h]
\caption{Example test query and top-3 training passages (with their Wikipedia page title) from WikiText-2.
The test query token and the top scoring training token are highlighted in \bluet{bold}. The query is from the test text.
We see some training passages about ``some contributions of somebody on something'' getting high attention scores.}
\label{tab:wiki1}
\begin{center}
\begin{tabular}{l|l}
\toprule
Query & her painting was printed opposite that of Tommy Watson , who was by this time famous ,\\
(\textit{Josepha Petrick Kemarre}) & particularly for his \bluet{contribution} to the design of a new building for ...\\
 \midrule
Top-1 (\textit{Laurence Olivier}) & In February 1960 , for his \bluet{contribution} to the film industry ,\\
 & Olivier was inducted into the Hollywood Walk of Fame  \\
\midrule
Top-2 (\textit{Khoo Kheng-Hor}) & he was appointed as honorary Assistant Superintendent of Police by\\
 & the Singapore Police Force in recognition for his \bluet{contribution} as consultant \\
 \midrule
Top-3 (\textit{History of AI}) & Colby did not credit Weizenbaum for his \bluet{contribution} to the program .\\
\bottomrule
\end{tabular}
\end{center}
\end{table}

\section{Further Discussion on Scalability}
\label{app:scalability}
We further discuss the scalability of the analysis introduced here for larger models.
The main requirement for the analysis presented here is to store training datapoints during training, whose size linearly increases with the number of training steps.
The complexity of test-time attention weight computation is also linear w.r.t.~the number of training steps, and it is just a one-time computation, which is nothing compared to the resources needed for training the model.
The largest GPT3 model has a state size of 12K and is trained on 300B tokens.
The dual form of one self-attention layer would thus require 300G * 12K * 4 = 14PB storage.
This is huge, but not infeasible.
Alternatively, if the training is reproducible, we could opt for not storing training datapoints: train the model once, compute test queries, then re-train the model to recompute the training datapoints to compute
test attention weights and store only the statistics relevant for the analysis (e.g., top-k and sum).

\begin{table}[h]
\caption{Example test query and top-3 training passages (with their Wikipedia page title) from WikiText-2.
The test query token and the top scoring training token are highlighted in \bluet{bold}. The query is from the test text.
We see training examples about ``somebody rating (or commenting on) something (e.g.~movie/song)''
getting high attention scores.
}
\label{tab:wiki3}
\begin{center}
\begin{tabular}{l|l}
\toprule
Query & ... IGN 's Matt gave " The " a score of 9 @.@ 4 out of 10 , describing \bluet{it} as " a ... \\
(\textit{The Snowmen}) &  in storytelling " which " refreshingly " lacked traditional Christmas references \\
 \midrule
Top-1  & Den of Geek writer named it the " finest " stand @-@ alone episode of the second season ,\\
(\textit{Irresistible (film)}) & describing \bluet{it} as " a genuinely creepy 45 @-@ minute horror movie " ... \\
\midrule
Top-2 & NME felt that it was the " most impressive " song on the album , describing \bluet{it} as a \\
(\textit{Moment of Surrender}) & " gorgeously sparse prayer built around Adam Clayton 's bassline and Bono 's rough " ... \\
 \midrule
Top-3 (\textit{Species (film)}) & James from magazine gave the film 2 out of 5 stars , describing \bluet{it} as " ' Alien ' meets ... \\
\bottomrule
\end{tabular}
\end{center}
\end{table}

\end{document}